\pgfplotsset{compat=1.18}
\newcommand{\alghl}[1]{{\color{blue} #1}} 
\newcommand{\KP}[1]{{\color{red} #1}}
\newtheorem{theorem}{Theorem}
\newtheorem{lemma}{Lemma}
\title{Online Risk-Averse Planning in POMDPs Using Iterated CVaR Value Function}
\author{
Yaacov Pariente$^1$\and
Vadim Indelman$^{2,3}$\\
\affiliations
$^1$Faculty of Mathematics\\
$^2$Stephen B. Klein Faculty of Aerospace Engineering\\
$^3$Faculty of Data and Decision Sciences\\
Technion - Israel Institute of Technology\\
\emails
yaacovp@campus.technion.ac.il,
vadim.indelman@technion.ac.il
}
\begin{document}

\maketitle
\raggedbottom

\begin{abstract}
    We study risk-sensitive planning under partial observability using the dynamic risk measure Iterated Conditional Value-at-Risk (ICVaR). A policy evaluation algorithm for ICVaR is developed with finite-time performance guarantees that do not depend on the cardinality of the action space. Building on this foundation, three widely used online planning algorithms—Sparse Sampling, Particle Filter Trees with Double Progressive Widening (PFT-DPW), and Partially Observable Monte Carlo Planning with Observation Widening (POMCPOW)—are extended to optimize the ICVaR value function rather than the expectation of the return. Our formulations introduce a risk parameter $\alpha$, where $\alpha = 1$ recovers standard expectation-based planning and $\alpha < 1$ induces increasing risk aversion. For ICVaR Sparse Sampling, we establish finite-time performance guarantees under the risk-sensitive objective, which further enable a novel exploration strategy tailored to ICVaR. Experiments on benchmark POMDP domains demonstrate that the proposed ICVaR planners achieve lower tail risk compared to their risk-neutral counterparts.
\end{abstract}

\section{Introduction}
Autonomous agents have emerged as pivotal components in various fields such as robotics, healthcare, education, manufacturing, and financial services. The deployment of these agents in real-world environments necessitates rigorous adherence to safety protocols to ensure their reliable and ethical operation. One of the key challenges in this domain is partial observability, where agents must make decisions with incomplete information about their surroundings. Addressing this challenge often involves employing robust decision-making frameworks like risk-averse Partially Observable Markov Decision Processes (POMDPs), which incorporate risk metrics to optimize performance while mitigating potential hazards. 

The Conditional Value at Risk (CVaR) \cite{rockafellar2000optimization} is a widely used risk measure that facilitates the optimization of the upper tail of the cost's distribution. The dual representation of CVaR \cite{artzner1999coherent} enables its interpretation as the worst-case expectation of the cost \cite{chow2015risk}, thereby motivating its use for risk-averse decision making. CVaR is also a coherent risk measure with desirable properties for safe planning \cite{majumdar2020should}, and its estimators have performance guarantees that ensure their reliability in practice \cite{brown2007large,pmlr-v97-thomas19a}.

Integration of CVaR into MDPs can be achieved through several methodologies, such as defining the value function as the CVaR of the return \cite{chow2015risk} or using CVaR as an optimization constraint \cite{chow2014algorithms}. A difficulty in optimizing the CVaR of the return, in contrast to the expectation of the return, is that CVaR recursive equations \cite{pflug2000some} are hard to solve \cite{chow2015risk}. An alternative method involves utilizing the dynamic risk measure Iterated CVaR (ICVaR) \cite{Mary2004} to define the value function, thereby ensuring agent safety \cite{du2022provably}. For a detailed discussion on the distinction between CVaR and ICVaR, see \cite{du2022provably}. Risk-averse POMDP planning was studied with general dynamic risk objectives, where CVaR is a special case \cite{ahmadi2020risk}.

Monte Carlo Tree Search (MCTS) is a widely used algorithm for decision-making in sequential fully observable problems. It combines principles of tree search and stochastic simulation to iteratively explore and evaluate potential action sequences \cite{coulom2006efficient}. Several variants of MCTS have been developed to address the challenges of planning in POMDPs. Among these, POMCP (Partially Observable Monte Carlo Planning) represents a direct extension of MCTS to the POMDP setting, effectively handling partial observability through particle-based belief state approximations \cite{silver2010monte}. Extensions such as POMCPOW, POMCP-DPW, and PFT-DPW further generalize POMCP by enabling planning in continuous action and observation spaces, thereby broadening its applicability to complex domains \cite{sunberg2018online}.

We present a framework for risk-averse online planning in partially observable domains using the ICVaR dynamic risk measure. Our contributions are fourfold. First, we develop a policy evaluation algorithm for ICVaR with finite-time performance guarantees. Second, we introduce ICVaR Sparse Sampling, extending sparse sampling to optimize ICVaR and deriving corresponding finite-time guarantees. Third, we adapt POMCPOW and PFT-DPW to optimize ICVaR. Fourth, leveraging the finite-time guarantees, we propose a novel exploration strategy specifically designed for ICVaR objectives, replacing standard MCTS exploration which is tailored to expectation-based value functions. To our knowledge, this work is the first to develop online planning methods for risk-averse POMDPs.

\section{Preliminaries}
\subsection{Partially Observable Markov Decision Process}\label{sec:POMDP_preliminaries}
A finite horizon POMDP is defined as a tuple $M\triangleq (X,A,Z,T,O,c, \gamma, b_0)$, where $X,A,Z$ are the state, action and observation spaces respectively. The state transition model $T(x_{t+1}|x_t,a_t) \triangleq P(x_{t+1}|x_t,a_t)$ is the probability of moving from state $x_t$ to state $x_{t+1}$, given the agent performed the action $a_t$. The observation model $O(z_t|x_t) \triangleq P(z_t|x_t)$ is the probability of observing $z_t$, given the true state is $x_t$. The cost function is defined by $c:B\times A \rightarrow \mathbb{R}$, where $B$ is the set of all beliefs. 

Due to partial observability, the agent maintains a probability distribution over the current state given the previous observations and actions, known as the belief. Formally, the belief is defined by $b(x_t) \triangleq P(x_t|H_t)$ for $x_t\in X$ and history $H_t \triangleq \{z_{1:t},a_{0:t-1},b_0\}$, and can be expressed recursively by the following equation $b(x_t) =\eta_tP(z_t|x_t)\int_{x_{t-1}\in X}P(x_t|x_{t-1},a_{t-1})b(x_{t-1})dx_{t-1}$, where $\eta_t$ is a normalization constant. 

A policy $a_t=\pi_t(b_t)$ is a mapping from a belief to an action at time t. The cost of action $a_t$ after seeing belief $b_t$ is $c(b_t,a_t) \triangleq E_{x \sim b_t}[c_x(x,a_t)]$ such that $|c_x(x,a_t)|\leq R_{max}$. The cost for the finite horizon  $T\in\mathbb{N}$, also known as the return, is $R_{t:T} \triangleq\sum_{\tau=t}^T \gamma^{\tau-t}c(b_\tau,a_\tau)$, which is a measure of the agent’s success at time t.

In a standard setting, the value function  is defined, for a given policy sequence $\pi\triangleq\{\pi_t, \ldots \pi_{t+T}\}$  and planning horizon of $T$ time steps, as the expected return,
\begin{equation}\label{Eq:regular_V_function}
	V^\pi (b_t)\triangleq\mathbb{E}[R_{t:T}|b_t,\pi]=\sum_{\tau=t}^T \gamma^{\tau-k}\mathbb{E}[c(b_\tau,a_\tau)|b_t,\pi],
\end{equation}
and the Q function is 
\begin{equation}\label{Eq:regular_Q_function}
	Q^\pi (b_t,a_t) \triangleq c(b_t,a_t)+ \gamma \mathbb{E}_{z_{t+1}}[ V^\pi(b_{t+1})|b_t,a_t].
\end{equation}
In \eqref{Eq:regular_V_function}, $a_t=\pi_t(b_t)$, and in \eqref{Eq:regular_Q_function}, $a_t \in A$ is some action.

\subsection{Conditional Value at Risk}\label{sec:cvar_preliminaries}
Let \( Y \) be a random variable with $\mathbb{E}|Y|<\infty$ and define $F(y) \triangleq P(Y\leq y)$. The value at risk at confidence level $\alpha \in (0,1)$ is the $1-\alpha$ quantile of $Y$, i.e., $VaR_\alpha (Y) \triangleq \sup\{y\in \mathbb{R}:F(y) \leq 1-\alpha\}$. The conditional value at risk (CVaR) at confidence level $\alpha$ is defined as \cite{rockafellar2000optimization}
\begin{equation}
	CVaR_\alpha(Y):=\inf_{w\in \mathbb{R}}\Bigl\{w+\frac{1}{\alpha}\mathbb{E}[(Y-w)^+]\Bigr\},
\end{equation}
where $(y)^+ \triangleq \max(y,0)$. For a continuous $F$, it holds that \cite{pflug2000some}
\begin{equation}\label{def:cvar_as_conditional_expectation}
	CVaR_\alpha(Y)=\mathbb{E}[Y|Y>VaR_\alpha(Y)]=\frac{1}{\alpha}\int_{1-\alpha}^1 F^{-1}(v)dv.
\end{equation}
Let $Y_i \overset{\text{iid}}{\sim} F$ for $i\in\{1,\dots,n\}$. Denote by
\begin{equation}\label{eq:brown_cvar_estimator}
	\hat{C}_\alpha(Y) \triangleq \hat{C}_\alpha(\{Y_i\}^{n}_{i=1}) \triangleq \inf_{y\in \mathbb{R}} \Bigl\{y+\frac{1}{n\alpha} \sum_{i=1}^n(Y_i-y)^+\Bigr\}
\end{equation}
the estimate of $CVaR_\alpha(Y)$ \cite{brown2007large}.

\eqref{eq:brown_cvar_estimator} can be expressed as
\begin{equation}\label{eq:cvar_estimator_sorted_sample}
	\hat{C}_\alpha(Y)=\frac{1}{\lceil \alpha n \rceil} \sum_{i=1}^{\lceil \alpha n \rceil} Y^{(i)},
\end{equation}
where $Y^{(i)}$ is the $i$th order statistic of $Y_1,\dots,Y_n$ in ascending order, i.e., the mean of the $\lceil \alpha n \rceil$ smallest samples.

\section{Problem Formulation}
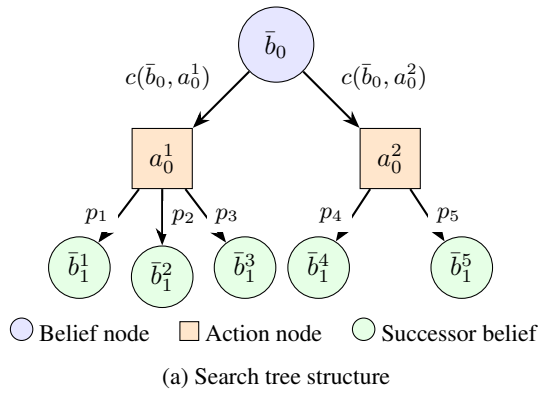
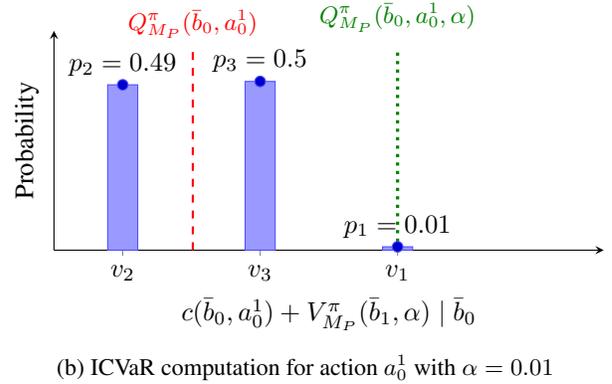
\begin{figure*}[t]
	\centering
	\begin{minipage}[b]{0.45\textwidth}
		\centering
		\begin{tikzpicture}[
			state/.style={circle, draw, minimum size=1cm, fill=blue!10},
			action/.style={rectangle, draw, minimum size=0.8cm, fill=orange!20},
			successor/.style={circle, draw, minimum size=0.8cm, fill=green!10},
			edge_label/.style={midway, fill=white, font=\small},
			arrow/.style={-{Stealth}, thick},
			brace/.style={decorate, decoration={brace, amplitude=5pt, raise=2pt}},
			]
			
			\node[state] (s) {$\bar{b}_0$};
			
			\node[action, below left=0.75cm and 0.75cm of s] (a1) {$a_0^1$};
			\node[action, below right=0.75cm and 0.75cm of s] (a2) {$a_0^2$};
			
			\draw[arrow] (s) -- (a1) node[edge_label, above left] {$c(\bar{b}_0, a_0^1)$};
			\draw[arrow] (s) -- (a2) node[edge_label, above right] {$c(\bar{b}_0, a_0^2)$};
			
			\node[successor, below left=0.75cm and 0.4cm of a1] (s1_1) {$\bar{b}_1^1$};
			\node[successor, below=0.75cm of a1] (s1_2) {$\bar{b}_1^2$};
			\node[successor, below right=0.75cm and 0.4cm of a1] (s1_3) {$\bar{b}_1^3$};
			
			\draw[arrow] (a1) -- (s1_1) node[edge_label, left] {$p_1$};
			\draw[arrow] (a1) -- (s1_2) node[edge_label, right] {$p_2$};
			\draw[arrow] (a1) -- (s1_3) node[edge_label, right] {$p_3$};
			
			\node[successor, below left=0.75cm and 0.25cm of a2] (s2_1) {$\bar{b}_1^4$};
			\node[successor, below right=0.75cm and 0.25cm of a2] (s2_2) {$\bar{b}_1^5$};
			
			\draw[arrow] (a2) -- (s2_1) node[edge_label, left] {$p_4$};
			\draw[arrow] (a2) -- (s2_2) node[edge_label, right] {$p_5$};
			
			\node[below=3cm of s, align=left, font=\small] {
				\tikz\node[state, minimum size=0.25cm]{}; Belief node \quad
				\tikz\node[action, minimum size=0.25cm]{}; Action node \quad
				\tikz\node[successor, minimum size=0.25cm]{}; Successor belief
			};
			
		\end{tikzpicture}
		\subcaption{Search tree structure}
		\label{fig:icvar_tree_structure}
	\end{minipage}
	\hfill
	\begin{minipage}[b]{0.5\textwidth}
		\centering
		\begin{tikzpicture}
			\begin{axis}[
				width=\textwidth,
				height=4.5cm,
				xlabel={$c(\bar{b}_0, a_0^1) + V^\pi_{M_P}(\bar{b}_1, \alpha) \mid \bar{b}_0$},
				ylabel={Probability},
				ymin=0, ymax=0.65,
				xmin=-0.5, xmax=3.5,
				xtick={0, 1, 2},
				xticklabels={$v_2$, $v_3$, $v_1$},
				ytick=\empty,
				axis lines=left,
				enlargelimits=false,
				clip=false,
				]
				
				\addplot+[ybar, bar width=0.4cm, fill=blue!40, draw=blue!70] 
				coordinates {(0, 0.49) (1, 0.5) (2, 0.01)};
				
				\node[above] at (axis cs:0, 0.49) {$p_2=0.49$};
				\node[above] at (axis cs:1, 0.5) {$p_3=0.5$};
				\node[above] at (axis cs:2, 0.01) {$p_1=0.01$};
				
				\addplot[dashed, thick, red] coordinates {(0.51, 0) (0.51, 0.6)};
				\node[above, red, font=\small] at (axis cs:0.51, 0.6) {$Q^\pi_{M_P}(\bar{b}_0, a_0^1)$};
				
				\addplot[dotted, very thick, green!50!black] coordinates {(2, 0) (2, 0.6)};
				\node[above, green!50!black, font=\small, align=center] at (axis cs:2, 0.62) {$Q^\pi_{M_P}(\bar{b}_0, a_0^1, \alpha)$};
			\end{axis}
		\end{tikzpicture}
		\subcaption{ICVaR computation for action $a_0^1$ with $\alpha=0.01$}
		\label{fig:icvar_pdf}
	\end{minipage}
	
	\caption{Illustration of ICVaR-based Q-value computation in a search tree. (a) At each action node, the Q-value is computed as the immediate cost $c(\bar{b}_0,a)$ plus the CVaR of the value distribution over successor beliefs. (b) Example PDF for action $a_0^1$ with $p_1=0.01$, $p_2=0.49$, $p_3=0.5$ and $v_2 < v_3 < v_1$. The dashed red line shows the risk-neutral Q-value (expected value), while the dotted green line shows the risk-sensitive $Q(\bar{b}_0, a_0^1, \alpha)$ at the 0.99 quantile (upper tail). In this example, $Q^\pi_{M_P}(\bar{b}_0, a_0^1,\alpha)$ evaluates the value associated with the most adverse belief, even though such a belief may be rarely sampled. In contrast, $Q^\pi_{M_P}(\bar{b}_0, a_0^1)$ aggregates contributions from all beliefs irrespective of their associated cost, and therefore does not account for their relative risk or severity.}
	\label{fig:icvar_tree}
\end{figure*}

Our approach employs the dynamic CVaR risk measure as the value function. Recursive forms of static CVaR risk measure exist \cite{pflug2000some}, and can be used to learn optimal policies using Bellman equations. However, these formulas are hard to compute in practice \cite{chow2015risk}. Hence, we choose a computationally cheaper alternative and optimize the ICVaR value function \cite{chu2014markov}. Formally, let $\alpha \in (0,1)$ and policy $\pi$. Define the ICVaR action-value function
\begin{equation}\label{eq:icvar_q_function_def}
	\begin{aligned}
		Q^{\pi}_{M, t}(b_t, a, \alpha) \! &\triangleq \! c(b_t, a)  \\
		&+ \gamma CVaR_\alpha^P[V_{M, t+1}^\pi(b_{t+1}, \alpha)|b_t, a, \pi],
	\end{aligned}
\end{equation}
for 
\begin{equation}
	\begin{aligned}
		&CVaR_\alpha^P[V_{M}^\pi(b_{t+1}, \alpha)|b_t, a, \pi] \! \\
		&\triangleq \underset{b_{t+1} \sim P(\cdot|b_{t}, a)}{CVaR}[V_{M}^\pi(b_{t+1}, \alpha)].
	\end{aligned}
\end{equation}
The value function is defined directly using the action-value function by 
\begin{equation}\label{eq:icvar_v_function_def}
	\begin{aligned}
		V_{M, t}^\pi(b_t, \alpha) \triangleq Q^{\pi}_{M,t}(b_t, \pi(b_t), \alpha),
	\end{aligned}
\end{equation}
such that $V_{M, t}^\pi(b_t,\alpha)=0$ for all $t>T$. Figure~\ref{fig:icvar_tree} illustrates the construction of a search tree under ICVaR optimization and highlights the distinction between ICVaR-based and standard expectation-based optimization.

In order to estimate the theoretical action-value function, we consider a particle belief MDP (PB-MDP) setting. Formally, denote by $M_P \triangleq (\Sigma, A, \tau, \rho, \gamma)$ the PB-MDP that is defined with respect to the POMDP $M$ \cite{lim2023optimality} and $N_p \in \mathbb{N}$, where 
\begin{itemize}
	\item $\Sigma \triangleq \{\bar{b}:\bar{b}=\{(x_i,w_i)\}_{i=1}^{N_p},x_i\in X, \forall i,w_i\geq 0,\exists i\text{ such that } w_i>0\}$ is the state space over the particle beliefs.
	\item $A$ is the action space as defined in the POMDP $M$.
	\item $\tau(\bar{b}_{t+1}|\bar{b}_{t}, a)$ is the belief transition probability, for $a\in A,\bar{b}\in \Sigma$.
	\item $\rho(\bar{b},a) \triangleq \frac{\sum_{j=1}^{N_p} w_i c(x_i,a)}{\sum_{i=1}^{N_p} w_i}$ is the state dependent cost.
	\item $\gamma$ as defined in the POMDP $M$.
\end{itemize}
The action-value function with respect to the PB-MDP is defined by \begin{equation}
	\begin{aligned}
		Q^{\pi}_{M_P, t}(\bar{b}_t, a, &\alpha) \! \triangleq \! c(\bar{b}_t, a) \\
		&+ \gamma CVaR_\alpha^{M_P}[V_{M_P, t+1}^\pi(\bar{b}_{t+1}, \alpha)|\bar{b}_t, a, \pi],
	\end{aligned}
	\label{eq:q_hat_estimation}
\end{equation}
and value function is defined directly using the action-value function by 
\begin{equation}\label{Eq:cvar_Q_function_def}
	\begin{aligned}
		V_{M_P, t}^\pi(\bar{b}_t, \alpha) \triangleq Q^{\pi}_{M_P,t}(\bar{b}_t, \pi(\bar{b}_t), \alpha),
	\end{aligned}
\end{equation}
such that $V_{M_P, t}^\pi(b_t,\alpha)=0$ for all $t>T$. 

The goal of the paper is to perform planning within this risk-averse framework. 

\section{ICVaR Policy Evaluation}\label{sec:icvar_policy_evaluation}
The first contribution of this paper is Algorithm~\ref{alg:icvar_sparse_sampling_policy_evaluation}, a policy evaluation algorithm that estimates the ICVaR PB-MDP action-value function for a given policy $\pi$.  The algorithm recursively estimates the value function \eqref{eq:q_hat_estimation} by sampling $N_b$ successor beliefs from each belief-action pair and aggregating their values using the CVaR statistic. This captures the tail risk of the successor value function's distribution under policy $\pi$. Unlike expectation-based policy evaluation, which averages over successor values, ICVaR policy evaluation focuses on the worst $\alpha$-fraction of outcomes.

At time $t$, the agent has access to a particle belief $\bar{b}_t \triangleq \{(x_t^i,w_t^i)\}_{i=1}^{N_p}$, where $x_t^i\in X$ is the $i$th particle state and $w_t^i$ is the particle's weight. The algorithm simulates $N_b$ successor beliefs $\{\bar{b}_{t+1}^i\}_{i=1}^{N_b}$ and estimates their values, yielding
\begin{align}\label{eq:Qest}
	\hat{Q}_{M_P, t}^\pi(\bar{b}_t,a,\alpha) &\triangleq \sum_{j=1}^{N_p} \tilde{w}_t^jc(x_t^j, a) \\
	&+ \hat{C}_\alpha(\{\hat{V}_{M_P, t+1}^\pi(\bar{b}_{t+1}^i)\}_{i=1}^{N_b}),\\
	\hat{V}_{M_P, t}^\pi(\bar{b}_t,\alpha) &\triangleq \hat{Q}_{M_P, t}^{\pi_t}(\bar{b}_t,\pi_t(\bar{b}_t),\alpha), \label{eq:Vest}
\end{align}
where $\tilde{w}_t^j \triangleq w_t^j / \sum_{k=1}^{N_p} w_t^k$ are the normalized weights. The states and weights at time $t$ are treated as constants, while those at time $t+1$ are random variables obtained via simulation. Algorithm~\ref{alg:icvar_sparse_sampling_policy_evaluation} implements \eqref{eq:q_hat_estimation}, and Figure~\ref{fig:ss_policy_eval_illustration} illustrates the belief tree expansion.
\begin{figure}[H]
	\centering
	\begin{tikzpicture}[
		scale=0.7, every node/.style={scale=0.7},
		state/.style={circle, draw, minimum size=1cm, fill=blue!10},
		action/.style={rectangle, draw, minimum size=0.7cm, fill=orange!20},
		successor/.style={circle, draw, minimum size=0.9cm, fill=green!10},
		arrow/.style={-{Stealth}, thick},
		plan_label/.style={font=\scriptsize, fill=white, inner sep=1pt},
		]
		
		\node[state] (b0) {$\bar{b}_0$};
		
		\node[action, below=0.6cm of b0] (a0) {$\pi(\bar{b}_0)$};
		
		\draw[arrow] (b0) -- (a0);
		
		\node[successor, below left=0.5cm and 1.2cm of a0] (b1_1) {$\bar{b}_1^1$};
		\node[successor, below left=0.5cm and 0.1cm of a0] (b1_2) {$\bar{b}_1^2$};
		\node[successor, below right=0.5cm and 0.1cm of a0] (b1_3) {$\bar{b}_1^3$};
		\node[successor, below right=0.5cm and 1.2cm of a0] (b1_4) {$\bar{b}_1^4$};
		
		\draw[arrow] (a0) -- (b1_1);
		\draw[arrow] (a0) -- (b1_2);
		\draw[arrow] (a0) -- (b1_3);
		\draw[arrow] (a0) -- (b1_4);
		
		\node[action, below=0.5cm of b1_1] (a1) {$\pi(\bar{b}_1^1)$};
		\draw[arrow] (b1_1) -- (a1);
		
		\node[action, below=0.5cm of b1_2] (a2) {$\pi(\bar{b}_1^2)$};
		\draw[arrow] (b1_2) -- (a2);
		
		\node[action, below=0.5cm of b1_3] (a3) {$\pi(\bar{b}_1^3)$};
		\draw[arrow] (b1_3) -- (a3);
		
		\node[action, below=0.5cm of b1_4] (a4) {$\pi(\bar{b}_1^4)$};
		\draw[arrow] (b1_4) -- (a4);
		
		\node[below=0.02cm of a1] {$\vdots$};
		\node[below=0.02cm of a2] {$\vdots$};
		\node[below=0.02cm of a3] {$\vdots$};
		\node[below=0.02cm of a4] {$\vdots$};
		
		\node[below=2.6cm of a0, align=left, font=\normalsize] {
			\tikz\node[state, minimum size=0.5cm]{}; Initial belief \quad
			\tikz\node[action, minimum size=0.5cm]{}; Action node \quad
			\tikz\node[successor, minimum size=0.5cm]{}; Sampled belief
		};
		
	\end{tikzpicture}
	\caption{ICVaR policy evaluation belief tree 
		with $N_b=4$.}
	\label{fig:ss_policy_eval_illustration}
\end{figure}
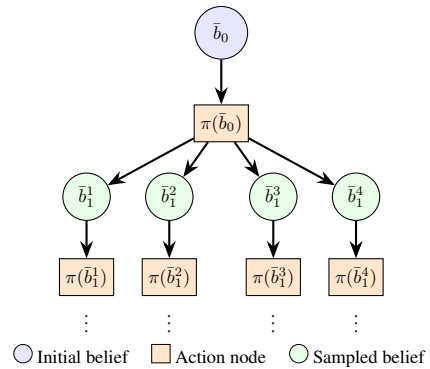
The time complexity of policy evaluation is $O\bigl((N_pN_b)^{T-t}\bigr)$, as only one action is evaluated per belief node.

\section{ICVaR Sparse Sampling}\label{sec:icvar_sparse_sampling}
Building on the policy evaluation framework, we now consider the problem of computing an optimal policy. Unlike policy evaluation, which follows a fixed policy $\pi$, sparse sampling \cite{Kearns02jml} computes the optimal action at each belief by evaluating all actions in $A$ and selecting the one with minimum estimated action-value. This exhaustive search over actions enables optimal planning with finite-time guarantees but increases computational cost. In contrast to expectation-based sparse sampling, which selects actions minimizing expected cost, ICVaR sparse sampling selects actions that minimize the CVaR of the successor value function's distribution, yielding risk-sensitive policies. Figure~\ref{fig:icvar_sparse_sampling_tree} illustrates the search tree structure and the CVaR computation at each action node.

The optimal value functions are defined as $V^*_{M_P,t}(\bar{b}_t, \alpha) \triangleq \min_{a \in A} Q^*_{M_P,t}(\bar{b}_t, a, \alpha)$ and $Q^*_{M_P,t}(\bar{b}_t, a, \alpha)$ denotes the action-value function under the optimal policy $\pi^*$. The optimal policy, with respect to the estimated action-value function, can be computed via Bellman optimality as follows:
\begin{equation}\label{eq:estimated_optimal_policy}
	\pi^*(\bar{b}_t)=\min_{a_t\in A} \hat{Q}_{M_P}^{\pi^*}(\bar{b}_t,a_t,\alpha),
\end{equation}
where $\hat{Q}_{M_P}^{\pi^*}(\bar{b}_t,a_t,\alpha)$ is updated according to \eqref{eq:Qest} and \eqref{eq:Vest}. Algorithm~\ref{alg:icvar_sparse_sampling} presents the ICVaR Sparse Sampling procedure for computing the optimal policy.

\begin{algorithm}[H]
	\caption{ICVaR Policy Evaluation-$\pi$}
	\label{alg:icvar_sparse_sampling_policy_evaluation}
	\begin{algorithmic}[1]
		\STATE \textbf{Global Variables:} $\gamma$, $N_b$, $N_p$, $T$, $\pi$, $\alpha$
		\STATE
		\STATE \textbf{Function} \textsc{EstimateV$^{\pi}$}($\bar{b}, \alpha, t$)
		\STATE \textbf{Input:} Particle belief set $\bar{b} = \{(x^i_t, w^i_t)\}$, risk level $\alpha$, depth $t$
		\STATE \textbf{Output:} A scalar $\hat{V}^\pi_{M_P}(\bar{b}_t, \alpha)$ that is an estimate of $V^\pi_{M_P}(\bar{b}_t)$
		\IF{$t \geq T$}
		\STATE \textbf{return} 0
		\ENDIF
		\STATE $\hat{Q}^\pi_{M_P, t}(\bar{b}, \pi(\bar{b}), \alpha) \gets \textsc{EstimateQ}^\pi(\bar{b}, \pi(\bar{b}), \alpha, t)$
		\STATE \textbf{return} $\hat{V}^\pi_{M_P, t}(\bar{b}) \gets \hat{Q}^\pi_{M_P, t}(\bar{b}, \pi(\bar{b}), \alpha)$
		\STATE \textbf{End Function}
		\STATE
		\STATE \textbf{Function} \textsc{EstimateQ$^\pi$}($\bar{b}, a, \alpha, t$)
		\STATE \textbf{Input:} Particle belief set $\bar{b} = \{(x_i, w_i)\}$, action $a$, risk level $\alpha$, depth $t$
		\STATE \textbf{Output:} A scalar $\hat{Q}^\pi_{M_P, t}(\bar{b}, a, \alpha)$ that is an estimate of $Q^\pi_{M_P, t}(\bar{b}, a, \alpha)$ at time $t$
		\FOR{$i = 1$ to $N_b$}
		\STATE $\bar{b}'_i, \rho \gets \textsc{GenPF}(\bar{b}, a)$ \hfill (Alg.~\ref{alg:common-procedures})
		\STATE $\hat{V}^\pi_{M_P, t+1}(\bar{b}'_i) \gets \textsc{EstimateV}^\pi(\bar{b}'_i, \alpha, t + 1)$
		\ENDFOR
		\STATE \textbf{return} $\hat{Q}^\pi_{M_P, t}(\bar{b}, a, \alpha) \gets \rho + \gamma \, \hat{C}_\alpha\left(\{\hat{V}^\pi_{M_P, t+1}(\bar{b}'_i)\}^{N_b}_{i=1}\right)$
		\STATE \textbf{End Function}
	\end{algorithmic}
\end{algorithm}

The time complexity of computing the optimal policy $\pi^*$ is $O\Bigl{(}(N_pN_b|A|)^{T-t}\Bigr{)}$, assuming state-dependent costs, making it suitable only for problems that require a short planning horizon. Moreover, the algorithm requires enumeration over the action space $A$, limiting its applicability to problems with discrete and moderately-sized action spaces.

\begin{algorithm}[H]
	\caption{ICVaR Sparse Sampling}
	\label{alg:icvar_sparse_sampling}
	\begin{algorithmic}[1]
		\STATE \textbf{Global Variables:} $\gamma$, $N_b$, $N_p$, $T$, $\alpha$, $A$
		\STATE
		\STATE \textbf{Function} \textsc{EstimateV$^*$}($\bar{b}, \alpha, t$)
		\STATE \textbf{Input:} Particle belief set $\bar{b} = \{(x_t^i, w^i_t)\}$, risk level $\alpha$, depth $t$
		\STATE \textbf{Output:} A scalar $\hat{V}^*_{M_P}(\bar{b}_t, \alpha)$ that is an estimate of $V^*_{M_P}(\bar{b}_t, \alpha)$
		\IF{$t \geq T$}
		\STATE \textbf{return} 0
		\ENDIF
		\STATE $a^* \gets \arg\min_{a \in A} \textsc{EstimateQ}^*(\bar{b}, a, \alpha, t)$
		\STATE \textbf{return} $\hat{V}^*_{M_P, t}(\bar{b}, \alpha) \gets \hat{Q}^*_{M_P, t}(\bar{b}, a^*, \alpha)$
		\STATE \textbf{End Function}
		\STATE
		\STATE \textbf{Function} \textsc{EstimateQ$^*$}($\bar{b}, a, \alpha, t$)
		\STATE \textbf{Input:} Particle belief set $\bar{b} = \{(x_i, w_i)\}$, action $a$, risk level $\alpha$, depth $t$
		\STATE \textbf{Output:} A scalar $\hat{Q}^*_{M_P, t}(\bar{b}, a, \alpha)$ that is an estimate of $Q^*_{M_P, t}(\bar{b}, a, \alpha)$
		\FOR{$i = 1$ to $N_b$}
		\STATE $\bar{b}'_i, \rho \gets \textsc{GenPF}(\bar{b}, a)$ \hfill (Alg.~\ref{alg:common-procedures})
		\STATE $\hat{V}^*_{M_P, t+1}(\bar{b}'_i, \alpha) \gets \textsc{EstimateV}^*(\bar{b}'_i, \alpha, t + 1)$
		\ENDFOR
		\STATE \textbf{return} $\hat{Q}^*_{M_P, t}(\bar{b}, a, \alpha) \gets \rho + \gamma \, \hat{C}_\alpha\left(\{\hat{V}^*_{M_P, t+1}(\bar{b}'_i, \alpha)\}^{N_b}_{i=1}\right)$
		\STATE \textbf{End Function}
	\end{algorithmic}
\end{algorithm}

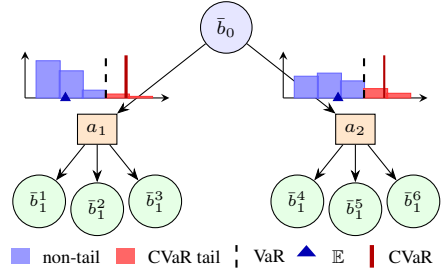
\begin{figure}[t]
	\centering
	\begin{tikzpicture}[
		state/.style={circle, draw, minimum size=0.5cm, fill=blue!10, font=\scriptsize},
		action/.style={rectangle, draw, minimum size=0.4cm, fill=orange!20, font=\scriptsize},
		successor/.style={circle, draw, minimum size=0.35cm, fill=green!10, font=\tiny},
		arrow/.style={-{Stealth}, thin},
		]

		\node[state] (b0) {$\bar{b}_0$};

		\node[action, below left=0.9cm and 1.2cm of b0] (a1) {$a_1$};
		\node[action, below right=0.9cm and 1.2cm of b0] (a2) {$a_2$};

		\draw[arrow] (b0) -- (a1);
		\draw[arrow] (b0) -- (a2);

		\node[above=0.05cm of a1, anchor=south] (dist1) {
			\begin{tikzpicture}
				\begin{axis}[
					width=3.5cm, height=2.2cm,
					axis lines=left,
					xtick=\empty, ytick=\empty,
					xmin=0, xmax=5, ymin=0, ymax=0.6,
					clip=false,
					]
					\addplot[ybar, bar width=0.32cm, fill=blue!40, draw=blue!60]
					coordinates {(0.8, 0.48) (1.6, 0.35) (2.4, 0.10)};
					\addplot[ybar, bar width=0.32cm, fill=red!60, draw=red!80]
					coordinates {(3.2, 0.05) (4.0, 0.02)};
					\addplot[dashed, thick, black] coordinates {(2.8, 0) (2.8, 0.55)};
					\addplot[only marks, mark=triangle*, mark size=2pt, blue!70!black]
					coordinates {(1.4, 0)};
					\addplot[very thick, red!70!black] coordinates {(3.5, 0) (3.5, 0.55)};
				\end{axis}
			\end{tikzpicture}
		};

		\node[above=0.05cm of a2, anchor=south] (dist2) {
			\begin{tikzpicture}
				\begin{axis}[
					width=3.5cm, height=2.2cm,
					axis lines=left,
					xtick=\empty, ytick=\empty,
					xmin=0, xmax=5, ymin=0, ymax=0.6,
					clip=false,
					]
					\addplot[ybar, bar width=0.32cm, fill=blue!40, draw=blue!60]
					coordinates {(0.8, 0.28) (1.6, 0.32) (2.4, 0.22)};
					\addplot[ybar, bar width=0.32cm, fill=red!60, draw=red!80]
					coordinates {(3.2, 0.12) (4.0, 0.06)};
					\addplot[dashed, thick, black] coordinates {(2.8, 0) (2.8, 0.55)};
					\addplot[only marks, mark=triangle*, mark size=2pt, blue!70!black]
					coordinates {(1.9, 0)};
					\addplot[thick, red!70!black] coordinates {(3.5, 0) (3.5, 0.55)};
				\end{axis}
			\end{tikzpicture}
		};

		\node[successor, below left=0.5cm and 0.25cm of a1] (b1_1) {$\bar{b}_1^1$};
		\node[successor, below=0.5cm of a1] (b1_2) {$\bar{b}_1^2$};
		\node[successor, below right=0.5cm and 0.25cm of a1] (b1_3) {$\bar{b}_1^3$};

		\draw[arrow] (a1) -- (b1_1);
		\draw[arrow] (a1) -- (b1_2);
		\draw[arrow] (a1) -- (b1_3);

		\node[successor, below left=0.5cm and 0.25cm of a2] (b2_1) {$\bar{b}_1^4$};
		\node[successor, below=0.5cm of a2] (b2_2) {$\bar{b}_1^5$};
		\node[successor, below right=0.5cm and 0.25cm of a2] (b2_3) {$\bar{b}_1^6$};

		\draw[arrow] (a2) -- (b2_1);
		\draw[arrow] (a2) -- (b2_2);
		\draw[arrow] (a2) -- (b2_3);

		\path (b1_2) -- (b2_2) coordinate[midway] (legend_center);
		\node[below=0.3cm of legend_center, anchor=north] {
			\begin{tikzpicture}
				\fill[blue!40] (-2.5,-0.1) rectangle (-2.25,0.1);
				\node[right, font=\scriptsize] at (-2.2,0) {non-tail};
				\fill[red!60] (-1.1,-0.1) rectangle (-0.85,0.1);
				\node[right, font=\scriptsize] at (-0.8,0) {CVaR tail};
				\draw[dashed, thick, black] (0.5,-0.15) -- (0.5,0.15);
				\node[right, font=\scriptsize] at (0.6,0) {VaR};
				\fill[blue!70!black] (1.3,0) -- (1.43,0.15) -- (1.56,0) -- cycle;
				\node[right, font=\scriptsize] at (1.65,0) {$\mathbb{E}$};
				\draw[line width=1.6pt, red!70!black] (2.3,-0.15) -- (2.3,0.15);
				\node[right, font=\scriptsize] at (2.4,0) {CVaR};
			\end{tikzpicture}
		};

	\end{tikzpicture}
	\caption{ICVaR sparse sampling tree with value distributions at action nodes. Blue bars: non-tail region; red bars: upper $\alpha$-tail. Dashed line: VaR threshold; solid red line: CVaR.}
	\label{fig:icvar_sparse_sampling_tree}
\end{figure}

\section{Performance Guarantees}\label{sec:performance_guarantees}
This section establishes finite-time performance guarantees for the ICVaR policy evaluation and sparse sampling algorithms. To the best of our knowledge, such guarantees for ICVaR-based sparse sampling have not been previously established in the literature.

\subsection{ICVaR Policy Evaluation Guarantees}
The following theorem provides probabilistic bounds on the estimation error of Algorithm~\ref{alg:icvar_sparse_sampling_policy_evaluation}.

\begin{theorem}\label{thm:guarantees_for_PB_MDP_vs_algo_v2}
	Let $\delta \in (0,1)$, and particle belief at time $t$ be $\bar{b}_t=\{x_t^i, w_t^i\}_{i=1}^{N_p}$. Define $\Delta R \triangleq R_{\max}-R_{\min}$, $T_{\alpha,t}\triangleq\sum_{k=0}^{T-t-1} \frac{T-t-k}{\alpha^k}$, and $T'_{\alpha,t}\triangleq\sum_{j=0}^{T-t-1} \frac{T-t+1-j}{\alpha^j}$. If $N_b>1$,
	\begin{equation}
		\begin{aligned}
			&P \Bigg( Q_{M_P, t}^\pi(\bar{b}_t, a, \alpha) - \hat{Q}^\pi_{M_P, t}(\bar{b}_t, a, \alpha) \\
			&\leq \gamma \Delta R \cdot T_{\alpha,t}\sqrt{\frac{5\ln(\frac{3(N_b^{T-t}-1)}{\delta(N_b-1)})}{\alpha N_b}} \Bigg)
			\geq 1-\delta.
		\end{aligned}
	\end{equation}
	\begin{equation}
		\begin{aligned}
			&P\Bigg(Q_{M_P, t}^\pi(\bar{b}_t, a, \alpha) - \hat{Q}^\pi_{M_P, t}(\bar{b}_t, a, \alpha) \\
			&\geq -\frac{\gamma \Delta R}{\alpha}\sqrt{\frac{\ln\left(\frac{1 - N_b^{T-t}}{\delta(1 - N_b)}\right)}{2 N_b}} T'_{\alpha,t}\Bigg)
			\geq 1 - \delta.
		\end{aligned}
	\end{equation}
\end{theorem}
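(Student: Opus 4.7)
The plan is to prove each inequality by induction on the depth $T-t$, combining a Brown-type concentration bound for the empirical CVaR estimator $\hat{C}_\alpha$ at each internal node with a Lipschitz-like inequality that controls how estimation errors at level $t{+}1$ propagate through $\hat{C}_\alpha$ at level $t$. The base case $t = T$ is vacuous since both $V^\pi_{M_P,T}$ and $\hat{V}^\pi_{M_P,T}$ are identically zero.

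For the inductive step, I would use \eqref{eq:q_hat_estimation} and \eqref{eq:Qest} to cancel the common immediate-cost term and write the one-step error as $Q^\pi_{M_P,t} - \hat{Q}^\pi_{M_P,t} = \gamma A_t + \gamma B_t$, where $A_t \triangleq CVaR_\alpha^{M_P}[V^\pi_{t+1}\mid\bar{b}_t,a,\pi] - \hat{C}_\alpha(\{V^\pi_{t+1}(\bar{b}'_i)\}_{i=1}^{N_b})$ is the one-step CVaR sampling error \textit{when the true successor values are known}, and $B_t \triangleq \hat{C}_\alpha(\{V^\pi_{t+1}(\bar{b}'_i)\}) - \hat{C}_\alpha(\{\hat{V}^\pi_{t+1}(\bar{b}'_i)\})$ is the error induced by replacing the true successor values by their recursive estimates.

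To bound $A_t$, I would note that $V^\pi_{t+1}$ takes values in an interval of length at most $(T-t)\Delta R$ and invoke a one-sided Brown-type concentration inequality. In one direction this yields, with probability $\geq 1-\delta$, a bound of order $(T-t)\Delta R\sqrt{5\ln(3/\delta)/(\alpha N_b)}$, whereas the opposite direction admits a bound of order $(T-t)\Delta R\sqrt{\ln(1/\delta)/(2N_b)}/\alpha$, which is exactly what produces the different ``$\sqrt{5\ln(3/\cdot)/\alpha}$'' versus ``$\sqrt{\ln(1/\cdot)/2}/\alpha$'' prefactors in the two theorem statements. To bound $B_t$, I would use the infimum representation \eqref{eq:brown_cvar_estimator}: plugging the minimizer that achieves $\hat{C}_\alpha(\{\hat{V}_i\})$ into the expression for $\hat{C}_\alpha(\{V_i\})$, together with the 1-Lipschitz property of $x\mapsto x^+$, yields $|\hat{C}_\alpha(\{V_i\}) - \hat{C}_\alpha(\{\hat{V}_i\})| \leq \tfrac{1}{N_b\alpha}\sum_i |V_i-\hat{V}_i| \leq \tfrac{1}{\alpha}\max_i |V_i-\hat{V}_i|$. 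This is the source of the $1/\alpha$ propagation factor that generates the $\alpha^{-k}$ powers in $T_{\alpha,t}$ and $T'_{\alpha,t}$. Invoking the inductive hypothesis at each of the $N_b$ sampled children and union-bounding over them converts the propagation term into a $\tfrac{1}{\alpha}\epsilon_{t+1}$ contribution, where $\epsilon_{t+1}$ denotes the bound from the previous depth.

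Combining these ingredients yields a recursion of the shape $\epsilon_t \leq \gamma(T-t)\Delta R\sqrt{\cdot} + \tfrac{1}{\alpha}\epsilon_{t+1}$, and unrolling this telescope reproduces exactly the closed forms $T_{\alpha,t}$ and $T'_{\alpha,t}$. Since the single-step Brown concentration event must hold simultaneously at every internal node of the sampling tree, I would then apply a union bound over all $\tfrac{N_b^{T-t}-1}{N_b-1}$ internal nodes with per-node failure probability scaled accordingly, which is precisely the cardinality appearing inside the logarithms of the theorem. The main obstacle I anticipate is the careful bookkeeping needed to align (i)~the direction-specific one-sided Brown constant, (ii)~the $1/\alpha$-Lipschitz propagation step with its inner union bound over children, and (iii)~the outer union bound across all tree nodes, so that the final recursion matches the precise combinatorial coefficients of $T_{\alpha,t}$ and $T'_{\alpha,t}$; the two directions must be handled in parallel because the upper and lower Brown tails for $CVaR_\alpha-\hat{C}_\alpha$ carry genuinely different constants, leading to the distinct closed forms.
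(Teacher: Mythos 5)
Your proposal follows essentially the same route as the paper's proof: the identical decomposition into a one-step CVaR sampling error (bounded via Brown-type concentration with direction-dependent constants) plus a propagated estimation error (bounded via the $\tfrac{1}{N_b\alpha}$-Lipschitz property of the empirical CVaR, which is the paper's Lemma on empirical CVaR subtraction), followed by a union bound over the $N_b$ children, an induction with the recursion $\epsilon_t \leq \gamma(T-t+1)\Delta R\sqrt{\cdot} + \epsilon_{t+1}/\alpha$, and a final reallocation of $\delta$ over the $\tfrac{N_b^{T-t}-1}{N_b-1}$ internal nodes. The only cosmetic differences are your use of a two-sided absolute-value Lipschitz bound where the paper uses the one-sided positive-part version, and a harmless off-by-one in the stated support length, neither of which changes the argument.
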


\begin{proof}
	The proof is available in the supplemental material.
\end{proof}
\subsection{ICVaR Sparse Sampling Guarantees}
The following theorem extends the policy evaluation guarantees to the sparse sampling setting of Algorithm~\ref{alg:icvar_sparse_sampling}, where the optimal policy is computed via Bellman optimality.

\begin{theorem}\label{thm:guarantees_for_sparse_sampling}
	Let $\delta \in (0,1)$, particle belief at time $t$ be $\bar{b}_t=\{x_t^i, w_t^i\}_{i=1}^{N_p}$, and let $|A|$ denote the number of actions. With $\Delta R$, $T_{\alpha,t}$, and $T'_{\alpha,t}$ as in Theorem~\ref{thm:guarantees_for_PB_MDP_vs_algo_v2}, if $N_b > 1$,
	\begin{equation}
		\begin{aligned}
			&P \Bigg( V_{M_P, t}^*(\bar{b}_t, \alpha) - \hat{V}^*_{M_P, t}(\bar{b}_t, \alpha) \\
			&\leq \gamma \Delta R \cdot T_{\alpha,t}\sqrt{\frac{5\ln\big(\frac{3|A|((|A|N_b)^{T-t}-1)}{\delta(|A|N_b-1)}\big)}{\alpha N_b}} \Bigg) \geq 1-\delta.
		\end{aligned}
	\end{equation}
	\begin{equation}
		\begin{aligned}
			&P\Bigg(V_{M_P, t}^*(\bar{b}_t, \alpha) - \hat{V}^*_{M_P, t}(\bar{b}_t, \alpha) \\
			&\geq -\frac{\gamma \Delta R}{\alpha}\sqrt{\frac{\ln\big(\frac{|A|((|A|N_b)^{T-t}-1)}{\delta(|A|N_b - 1)}\big)}{2 N_b}} T'_{\alpha,t}\Bigg) \geq 1 - \delta.
		\end{aligned}
	\end{equation}
\end{theorem}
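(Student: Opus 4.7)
The plan is to adapt the proof of Theorem~\ref{thm:guarantees_for_PB_MDP_vs_algo_v2} to the sparse-sampling setting, where at each belief node the algorithm estimates Q-values for all $|A|$ actions and then minimizes, rather than following a single fixed policy $\pi$. The backbone of the argument---CVaR sample concentration combined with backward induction that produces the $T_{\alpha,t}$ and $T'_{\alpha,t}$ multipliers---remains intact. Two modifications are needed: (i) a deterministic reduction that converts a bound on $\min$-vs-$\min$ into a uniform bound over actions, and (ii) an enlarged union bound that covers every belief--action pair in the sparse-sampling tree.

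First I would handle the min operator deterministically. For any belief $\bar{b}$ and any two functions $f,g$ on $A$, $\min_a f(a) - \min_a g(a) \le \max_a (f(a) - g(a))$, and applying this both ways yields
\[
\min_{a} (Q^*_{M_P,t}(\bar{b},a,\alpha) - \hat{Q}^*_{M_P,t}(\bar{b},a,\alpha)) \le V^*_{M_P,t}(\bar{b},\alpha) - \hat{V}^*_{M_P,t}(\bar{b},\alpha) \le \max_{a} (Q^*_{M_P,t}(\bar{b},a,\alpha) - \hat{Q}^*_{M_P,t}(\bar{b},a,\alpha)).
\]
Thus a uniform bound on the per-action Q-estimation error at every belief node of the tree immediately gives the desired bound on $V^* - \hat{V}^*$ at the root, and no extra probabilistic reasoning is needed to handle $\hat{\pi}^*$ being data-dependent.

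Next I would count the estimations and perform the union bound. The sparse-sampling tree rooted at $\bar{b}_t$ contains $(|A|N_b)^k$ belief nodes at depth $t+k$, each performing $|A|$ CVaR sample-mean estimations, for a total of $|A|\sum_{k=0}^{T-t-1}(|A|N_b)^k = |A|\cdot\frac{(|A|N_b)^{T-t}-1}{|A|N_b-1}$ Q-estimations---exactly the factor appearing in the theorem's log term. I would apply the same Brown-type upper-tail inequality (carrying the constant $3$ and the $\sqrt{5/(\alpha N_b)}$ rate) and Hoeffding-type lower-tail inequality used in Theorem~\ref{thm:guarantees_for_PB_MDP_vs_algo_v2} with allocated failure probability equal to $\delta$ divided by this count at each estimation, and then union-bound over the whole tree. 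This yields a single ``good event'' of probability at least $1-\delta$ on which every Q-estimation in the tree is simultaneously accurate.

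Finally, on this good event I would run the same backward induction as in Theorem~\ref{thm:guarantees_for_PB_MDP_vs_algo_v2}: at each depth the CVaR sample concentration contributes an additive Brown/Hoeffding term, and the error inherited from depth $t{+}1$ is inflated by the characteristic $1/\alpha$ factor of CVaR-of-sample-mean comparisons (via monotonicity and translation invariance of $\hat{C}_\alpha$), producing the double-sum structures $T_{\alpha,t}$ and $T'_{\alpha,t}$. The min-to-max reduction of the previous paragraph lets the induction pass through each belief node without any new probabilistic overhead. The main technical obstacle is the bookkeeping: one must arrange the per-estimation failure budget so that the closed-form $T_{\alpha,t}$ and $T'_{\alpha,t}$ expressions inherited from the policy-evaluation analysis are preserved verbatim while the log-term blows up only by the replacement $N_b \mapsto |A|N_b$ and an outer factor of $|A|$. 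A secondary subtlety is checking that the event on which every $\hat{Q}^*$ is accurate is strong enough to control both $\min_a Q^*$ and $\min_a \hat{Q}^*$ at the induced (random) minimizers, which is exactly what the uniform-over-$a$ reduction ensures.
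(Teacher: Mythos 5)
Your proposal is correct and follows essentially the same route as the paper's proof: the deterministic $\min$-vs-$\min$ reduction to a uniform per-action bound on $Q^*-\hat{Q}^*$, a union bound whose count $|A|\frac{(|A|N_b)^{T-t}-1}{|A|N_b-1}$ matches the paper's unrolled failure-probability recursion $\eta_t^* = |A|\delta + |A|N_b\,\eta_{t+1}^*$, and the same backward induction combining the Brown/Hoeffding CVaR concentration with the $1/\alpha$-inflated propagated error from the empirical-CVaR comparison lemma. The only cosmetic difference is that you allocate the per-estimation failure budget by counting nodes upfront, whereas the paper carries a recursive $\eta_t^*$ and substitutes for $\delta$ at the end; these are equivalent.
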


\begin{proof}[Proof sketch]
Since $V^*_{M_P,t}(\bar{b}_t, \alpha) = \min_a Q^*_{M_P,t}(\bar{b}_t, a, \alpha)$, bounding $|Q^* - \hat{Q}^*|$ for all actions yields bounds on $|V^* - \hat{V}^*|$. For any action $a$, the Q-function error decomposes as
\begin{align*}
	&Q^*_{M_P,t}(\bar{b}_t, a, \alpha) - \hat{Q}^*_{M_P,t}(\bar{b}_t, a, \alpha) \\
	&= \gamma \Big( \underbrace{CVaR_\alpha[V^*_{M_P,t+1}] - \hat{C}_\alpha[\{V^*_{M_P,t+1}(\bar{b}_{t+1}^i)\}_{i=1}^{N_b}]}_{\text{(I) CVaR sampling error}} \\
	&\quad + \underbrace{\hat{C}_\alpha[\{V^*_{M_P,t+1}(\bar{b}_{t+1}^i)\}] - \hat{C}_\alpha[\{\hat{V}^*_{M_P,t+1}(\bar{b}_{t+1}^i)\}]}_{\text{(II) Propagated estimation error}} \Big).
\end{align*}
Term (I) is bounded using CVaR concentration inequalities for i.i.d.\ samples. Term (II) is bounded via $\text{(II)} \leq \frac{1}{N_b\alpha}\sum_{i=1}^{N_b}(V^*_{M_P,t+1}(\bar{b}_{t+1}^i) - \hat{V}^*_{M_P,t+1}(\bar{b}_{t+1}^i))^+$, a one-sided inequality for empirical CVaR differences. Applying a union bound over all $|A|$ actions and $N_b$ successor beliefs, then recursing over the tree yields the stated bounds. The full proof is available in the supplemental material.
\end{proof}

\subsection{Remarks}
The bounds in Theorem~\ref{thm:guarantees_for_sparse_sampling} differ from those in Theorem~\ref{thm:guarantees_for_PB_MDP_vs_algo_v2} by replacing $N_b$ with $|A|N_b$ in the exponential terms and introducing an additional factor of $|A|$ in the logarithm. This reflects the union bound over all actions required when computing optimal values via Bellman optimality. When $|A| = 1$, the bounds reduce to those of Theorem~\ref{thm:guarantees_for_PB_MDP_vs_algo_v2}.

\textbf{Special cases.} When $\alpha = 1$, both theorems reduce to the standard expectation-based setting. When $N_b = 1$, the PB-MDP collapses to a standard MDP; corresponding bounds are available in the supplemental material.

\textbf{Convergence.} Both bounds are $O(\sqrt{\ln(N_b)/N_b})$, implying convergence to the true value as $N_b \to \infty$. For fixed $\delta$, $T$, and $\alpha$, the estimation error vanishes as $N_b$ increases. 

\section{MCTS Algorithms}
This section introduces MCTS-based algorithms that optimize the ICVaR value function for POMDP planning, extending the expectation-optimizing methods POMCPOW and PFT-DPW \cite{sunberg2018online}. The algorithms introduce a parameter $\alpha \in (0,1]$ that controls the level of risk sensitivity. When $\alpha = 1$, the algorithms perform expectation-based optimization analogous to the original POMCPOW and PFT-DPW algorithms, whereas for $\alpha < 1$, they  incorporate risk considerations into the planning process.

We employ the following notation throughout this section. A history is denoted by $h = (b, a_1, o_1, \dots, a_k, o_k)$, while $ha$ and $hao$ denote the history $h$ followed by an action $a$ and by an action–observation pair $(a,o)$, respectively. The variable $d$ denotes the current search depth, with $d_{\max}$ indicating the maximum allowable depth. The set $C$ represents the children of a node (together with the associated reward in the case of PFT-DPW). The variable $N$ records the number of visits to a node, $M$ records the number of times a history is generated by the model, and the immediate cost is denoted by $Imm$.

Each node maintains a list of associated states, denoted by $B$, with corresponding weights $W$. Furthermore, $V(ha)$ denotes the estimated value of executing action $a$ following history $h$. Unless otherwise specified, all variables $C, N, M, B, W, Imm$, and $V$ are implicitly initialized to $0$ or $\emptyset$.

\subsection{ICVaR Exploration}
\label{subsec:icvar_exploration}

A key challenge in extending MCTS to ICVaR objectives lies in the exploration mechanism. Standard UCB-based exploration relies on Hoeffding's inequality to construct confidence bounds on action-values, ensuring sufficient exploration while concentrating samples on promising actions. However, Hoeffding's inequality assumes that samples contribute linearly to a mean estimate. ICVaR, as a tail-risk measure, weights outcomes non-uniformly---only the worst $\alpha$-fraction of outcomes contribute to the value estimate. This renders standard concentration bounds invalid for ICVaR value estimation.

To address this, we introduce \textbf{ICVaR Progressive Widening}, which leverages ICVaR sparse sampling guarantees ICVaR policy evaluation guarantees (Theorem~\ref{thm:guarantees_for_PB_MDP_vs_algo_v2})  to construct valid confidence bounds on the action-value function during exploration. Specifically, we replace the constants that do not depend on the branching factor $N_b$ by $c \geq 0$, and note that $N_b$ corresponds to the visit count of an action node. To construct a UCB-like bound, we note that $M(ha)\leq M(h)$ and therefore 
\begin{equation}\label{eq:icvar_ucb_bound}
	c\sqrt{\frac{\ln\!\left(\frac{1 - {N(ha)}^{T-t}}{\delta(1 - N(ha))}\right)}{\alpha N(ha)}} \leq c\sqrt{\frac{\ln\!\left(\frac{1 - {N(h)}^{T-t}}{\delta(1 - N(h))}\right)}{\alpha N(ha)}}.
\end{equation}
The left-hand side of \eqref{eq:icvar_ucb_bound} is proportional to the bound in Theorem~\ref{thm:guarantees_for_PB_MDP_vs_algo_v2}, while the right-hand side is analogous to a UCB term, increasing for infrequently selected actions and decreasing as actions become well explored.

\begin{algorithm}[H]
	\caption{Common Procedures}
	\label{alg:common-procedures}
	\begin{algorithmic}[1]
		\STATE \textbf{Constant variables:} $\delta\in (0,1], \alpha\in(0,1]$
		
		\STATE \textbf{Function} \textsc{ICVaRActionProgWiden}($h$)
		\IF{$|C(h)| \le k_a \, N(h)^{\alpha_a}$}
		\STATE $a \gets \textsc{NextAction}(h)$
		\STATE $C(h) \gets C(h) \cup \{ha\}$
		\ENDIF
		\STATE \textbf{return} \textsc{ICvarExploration}($h$)
		\STATE \textbf{End Function}
		
		\STATE
		\STATE \textbf{Function} \textsc{ICvarExploration}($h$)
		\IF{$\exists a \in C(h)$ such that $M(ha) = 0$}
		\STATE \textbf{return} $a$
		\ENDIF
		\STATE $a \gets 
		\arg\min_{a \in C(h)}
		\Bigg[
		V(ha)
		- c\sqrt{\frac{\ln\!\left(\frac{1 - {M(h)}^{T-t}}{\delta(1 - M(h))}\right)}{\alpha M(ha)}}
		\Bigg]$
		\STATE \textbf{return} $a$
		\STATE \textbf{End Function}
		\STATE
		\STATE \textbf{Function} \textsc{GenPF}($\bar{b}, a$)
		\STATE \textbf{Input:} Particle belief set $\bar{b} = \{(x_i, w_i)\}$, action $a$
		\STATE \textbf{Output:} New updated particle belief set $\bar{b'} = \{(x'_i, w'_i)\}$, mean cost $\rho$
		\STATE $x_0 \gets$ sample $x_i$ from $\bar{b}$ w.p. $w_i / \sum_i w_i$
		\STATE $z \gets G(x_0, a)$
		\FOR{$i = 1$ to $N_p$}
		\STATE $x'_i, c_i \gets G(x_i, a)$
		\STATE $w'_i \gets w_i \cdot Z(z|a, x'_i)$
		\ENDFOR
		\STATE $\bar{b'} \gets \{(x'_i, w'_i)\}_{i=1}^{N_x}$
		\STATE $\rho \gets \sum_i w_i c_i / \sum_i w_i$
		\STATE \textbf{return} $\bar{b'}, \rho$
		\STATE \textbf{End Function}
	\end{algorithmic}
\end{algorithm}

\begin{algorithm}[H]
	\caption{ICVaR-PFT-DPW}
	\label{alg:pft-dpw}
	\textbf{Global Variables}: $N, V, Q, C, n_0, d_{\max}, k_o, \alpha_o, \alpha, \gamma$
	\begin{algorithmic}[1]
		\STATE \textbf{Function} \textsc{Plan}($b$)
		\FOR{$i = 1$ to $n_0$}
		\STATE \textsc{Simulate}($b, d_{\max}$)
		\ENDFOR
		\STATE \textbf{return} $\arg\min_a Q(ba)$
		\STATE \textbf{End Function}
		\STATE
		\STATE \textbf{Function} \textsc{Simulate}($b, d$)
		\IF{$d = 0$}
		\STATE \textbf{return} $0$
		\ENDIF
		\STATE \alghl{$a \gets \textsc{ICVaRActionProgWiden}(b)$} \hfill (Alg.~\ref{alg:common-procedures})
		\IF{$|C(ba)| \le k_o \, N(ba)^{\alpha_o}$}
		\STATE $(b', c) \gets \textsc{GenPF}(b, a)$ \hfill (Alg.~\ref{alg:common-procedures})
		\STATE $C(ba) \gets C(ba) \cup \{(b', c)\}$
		\ELSE
		\STATE $(b', c) \gets$ sample uniformly from $C(ba)$
		\ENDIF
		\STATE \alghl{SIMULATE($b'$, $d - 1$)}
		\STATE $N(b) \gets N(b) + 1$
		\STATE $N(ba) \gets N(ba) + 1$
		\STATE \alghl{$Q(ba) \gets c + \hat{C}_\alpha\left(\{V(b')\}_{b'\in C(ba)}\right)$}
		\STATE $V(b) \gets \min_{a \in C(b)} Q(ba)$
		\STATE \textbf{return} None 
		\STATE \textbf{End Function}
	\end{algorithmic}
\end{algorithm}

\subsection{ICVaR-POMCPOW and ICVaR-PFT-DPW}
\label{subsec:icvar_pomcpow_pft_dpw}
Both ICVaR-POMCPOW and ICVaR-PFT-DPW (Algorithm~\ref{alg:pomcpow} and  Algorithm~\ref{alg:pft-dpw}) are MCTS variants that retain the three fundamental steps: selection, expansion, and backpropagation. They employ a \textsc{Plan} function that receives a user-specified belief and returns the selected action. In addition to ICVaR Progressive Widening for action selection, both algorithms utilize Progressive Widening to regulate the branching factor at observation nodes. The widening schedule is chosen in accordance with Theorem~\ref{thm:guarantees_for_PB_MDP_vs_algo_v2} to ensure the approximation error remains bounded. Lines highlighted in {\color{blue}blue} indicate modifications from the original POMCPOW and PFT-DPW algorithms.

A key difference from the original algorithms is the absence of random rollouts. Standard MCTS algorithms use rollouts to generate fast estimates for recursive value updates. However, the CVaR statistic cannot be updated recursively, as it depends on the entire distribution of outcomes rather than individual samples. Instead, both ICVaR algorithms track a full path from the root belief node to the planning depth. Upon reaching a leaf node, a backpropagation phase updates the value estimates of all parent nodes along the trajectory up to the root, computing CVaR over the children of each action node. 

The primary distinction between the two algorithms lies in the \textsc{Simulate} function. ICVaR-POMCPOW (Algorithm~\ref{alg:pomcpow}) simulates state trajectories: at each step, a state is sampled from the current belief and propagated through the POMDP dynamics, with observations sampled according to the progressive widening criterion. ICVaR-PFT-DPW (Algorithm~\ref{alg:pft-dpw}) instead simulates belief trajectories by propagating particle filters through the tree using the $GenPF_{(m)}$ formulation \cite{sunberg2018online}. This design enables the algorithm to operate on problems where the belief update mechanism is abstract and user-defined. 

\begin{algorithm}[H]
	\caption{ICVaR-POMCPOW}
	\label{alg:pomcpow}
	\textbf{Global Variables}: $N, V, B, W, \mathcal{C}, M, Imm, n, d_{\max}, k_o, \alpha_o, \alpha, \gamma$
	\begin{algorithmic}[1]
		\STATE \textbf{Function} \textsc{Plan}($b$)
		\STATE $h \gets (b)$
		\FOR{$i = 1$ to $n$}
		\STATE Sample $s \sim b$
		\STATE \textsc{Simulate}($s, h, d_{\max}$)
		\ENDFOR
		\STATE \textbf{return} $\arg\min_a V(ha)$
		\STATE \textbf{End Function}
		\STATE
		\STATE \textbf{Function} \textsc{Simulate}($s, h, d$)
		\IF{$d = 0$}
		\STATE \textbf{return} None
		\ENDIF
		\STATE \alghl{$a \gets \textsc{ICVaRActionProgWiden}(h)$} \hfill (Alg.~\ref{alg:common-procedures})
		\STATE $(s', o, c) \sim G(s, a)$
		\STATE \alghl{$\text{OldWeightsSum} \gets \sum_{o\in \mathcal{C}(ha)} W(hao)$}
		\IF{$|\mathcal{C}(ha)| \leq k_o N(ha)^{\alpha_o}$}
		\STATE $M(hao) \gets M(hao) + 1$
		\ELSE
		\STATE $o \gets \text{select } o \in \mathcal{C}(ha)$ \text{w.p. $M(hao)/\sum M(hao)$}
		\ENDIF
		\STATE Append $s'$ to $B(hao)$
		\STATE Append $P(o \mid s, a, s')$ to $W(hao)$
		\IF{$o \notin \mathcal{C}(ha)$}
		\STATE $\mathcal{C}(ha) \gets \mathcal{C}(ha) \cup \{o\}$
		\ELSE
		\STATE $s' \gets \text{select } B(hao)[i]$ \text{ w.p. $W(hao)[i]/\sum_j W(hao)[j]$}
		\STATE $c \gets -R(s, a, s')$
		\ENDIF
		\STATE \alghl{SIMULATE(s', hao, d - 1)}
		\STATE $N(ha) \gets N(ha) + 1$
		\STATE $N(h) \gets N(h) + 1$
		\STATE \alghl{$\text{NewWeightsSum} \gets \text{OldWeightsSum}+P(o \mid s, a, s')$}
		\STATE \alghl{$Imm(ha) \gets \frac{Imm(ha)\times \text{OldWeightsSum} + c * P(o \mid s, a, s')}{\text{NewWeightsSum}}$}
		\STATE \alghl{$V(ha) \gets Imm(ha) + \hat{C}_\alpha\left(\{V(hao)\}_{o\in \mathcal{C}(ha)}\right)$} 
		\STATE $V(h) \gets \min_{a \in Ch(h)} V(ha)$
		\STATE \textbf{return} \text{None}
		\STATE \textbf{End Function}
	\end{algorithmic}
\end{algorithm}

\section{Experiments}
\begin{table}[h]
	\centering
	\small
	\setlength{\tabcolsep}{4pt}
	\renewcommand{\arraystretch}{1.2}
	\begin{tabular}{l cc}
		\toprule
		\textbf{Method}
		& \textbf{LaserTag (D,D,C)}
		& \textbf{LightDark (C,C,C)} \\
		\midrule
		POMCPOW
		& $15.06 \pm 0.40$
		& $25.73 \pm 0.96$ \\
		\textbf{ICVaR-POMCPOW}
		& $\mathbf{12.47 \pm 0.46}$
		& $\mathbf{16.72 \pm 0.08}$ \\
		\midrule
		PFT-DPW
		& $26.04 \pm 0.91$
		& $37.68 \pm 1.68$ \\
		\textbf{ICVaR-PFT-DPW}
		& $\mathbf{16.33 \pm 0.61}$
		& $\mathbf{18.52 \pm 0.23}$ \\
		\bottomrule
	\end{tabular}
	\caption{ICVaR performance ($\alpha = 0.1$) across 200 episodes per planner. Labels (D,D,C) denote discrete/continuous state, action, and observation spaces. Lower is better. Confidence intervals at 95\% level.}
	\label{tab:pomdp_results}
\end{table}

We evaluate our methods on two benchmark POMDP environments. LaserTag tests belief-driven pursuit and evasion in a grid world with noisy range sensing, while LightDark emphasizes localization-aware navigation under position-dependent observation noise. Together, these environments assess planning and belief maintenance under qualitatively different partial observability challenges.

We compare ICVaR-POMCPOW and ICVaR-PFT-DPW against their risk-neutral counterparts, POMCPOW and PFT-DPW. The ICVaR planners use $\alpha = 0.1$ and $\delta = 0.05$, with a planning budget of 4 seconds per step and horizon $T = 10$. The exploration constant is set to $c = T(R_{\max} - R_{\min})$. Each planner is evaluated over 200 episodes per environment. ICVaR is estimated using the policy evaluation algorithm (Section~\ref{sec:icvar_sparse_sampling}) with branching factor $N_b = 5$ and horizon $T = 3$.

Table~\ref{tab:pomdp_results} shows that ICVaR-optimizing planners consistently outperform their risk-neutral counterparts. ICVaR-POMCPOW reduces ICVaR by 17\% on LaserTag and 35\% on LightDark compared to POMCPOW. Similarly, ICVaR-PFT-DPW achieves reductions of 37\% and 51\% over PFT-DPW on the respective environments. These improvements demonstrate that explicitly optimizing for tail risk yields policies that perform better under the ICVaR criterion.

\section{Conclusions}
This paper introduced a framework for risk-averse online planning in POMDPs using the Iterated Conditional Value-at-Risk (ICVaR) dynamic risk measure. We developed ICVaR variants of three widely used planning algorithms—Sparse Sampling, POMCPOW, and PFT-DPW—that optimize a risk-sensitive objective rather than the expected return. To our knowledge, this work is the first to propose online planning methods in a risk-averse POMDP setting.

Our theoretical contributions include finite-time performance guarantees for both ICVaR policy evaluation and ICVaR Sparse Sampling. For Sparse Sampling, we derived bounds that explicitly characterize how estimation error scales with the risk parameter $\alpha$, branching factor, and planning horizon. These guarantees enabled a principled exploration strategy for MCTS based on ICVaR-specific concentration inequalities, replacing the standard Hoeffding-based exploration suited to expectation-based objectives.

The risk parameter $\alpha$ allows practitioners to interpolate between risk-neutral planning ($\alpha = 1$) and increasingly conservative strategies ($\alpha < 1$). Experiments on benchmark POMDP domains demonstrate that ICVaR planners achieve lower tail risk than their risk-neutral counterparts, validating the approach for safety-critical applications.

\bibliographystyle{named}
\bibliography{references}

\appendix
\section{Proofs}

\begin{theorem}\label{prf:guarantees_for_PB_MDP_vs_algo_v2}
	Let $\lambda > 0, \delta \in (0,1)$, and particle belief at time t $\bar{b}_t=\{x_t^i, w_t^i\}_{i=1}^{N_p}$. Then for $N_b > 1$,	
	\begin{equation}
		\begin{aligned}
			&P \Big( Q_{M_P, t}^\pi(\bar{b}_t, a, \alpha) - \hat{Q}^\pi_{M_P, t}(\bar{b}_t, a, \alpha) \\
			&\leq \gamma(R_{\max}-R_{\min})\sqrt{\frac{5\ln(\frac{3(N_b^{T-t}-1)}{\delta(N_b-1)})}{\alpha N_b}} \sum_{k=0}^{T-t-1} \frac{T-t-k}{\alpha^k}\Big) \\
			&\geq 1-\delta.
		\end{aligned}
	\end{equation}
	\begin{equation}
		\begin{aligned}
			\begin{split}
				&P\Bigg(Q_{M_P, t}^\pi(\bar{b}_t, a, \alpha) - \hat{Q}^\pi_{M_P, t}(\bar{b}_t, a, \alpha) \\
				&\geq -\frac{\gamma(R_{max} - R_{min})}{\alpha}\sqrt{\frac{\ln\left(\frac{1 - N_b^{T-t}}{\delta(1 - N_b)}\right)}{2 N_b}} \sum_{j=0}^{T-t-1} \frac{T-t+1-j}{\alpha^j}\Bigg) \\
				&\geq 1 - \delta
			\end{split}
		\end{aligned}
	\end{equation}
	For $N_b = 1$,
	\begin{equation}
		\begin{aligned}
			&P \Big( Q_{M_P, t}^\pi(\bar{b}_t, a, \alpha) - \hat{Q}^\pi_{M_P, t}(\bar{b}_t, a, \alpha) \\
			&\leq \gamma(R_{\max}-R_{\min})\sqrt{\frac{5\ln(\frac{3(T-t)}{\delta})}{\alpha}} \sum_{k=0}^{T-t-1} \frac{T-t-k}{\alpha^k}\Big) \\
			&\geq 1-\delta.
		\end{aligned}
	\end{equation}
	\begin{equation}
		\begin{aligned}
			\begin{split}
				&P\Bigg(Q_{M_P, t}^\pi(\bar{b}_t, a, \alpha) - \hat{Q}^\pi_{M_P, t}(\bar{b}_t, a, \alpha) \\
				&\geq -\frac{\gamma(R_{max} - R_{min})}{\alpha}\sqrt{\frac{\ln\left(\frac{T-t}{\delta}\right)}{2}} \sum_{j=0}^{T-t-1} \frac{T-t+1-j}{\alpha^j}\Bigg) \\
				&\geq 1 - \delta
			\end{split}
		\end{aligned}
	\end{equation}
\end{theorem}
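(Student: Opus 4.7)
The plan is to derive both the upper-tail and lower-tail bounds on $Q^\pi_{M_P,t} - \hat{Q}^\pi_{M_P,t}$ by induction on the remaining horizon $T-t$, handling the $N_b>1$ and $N_b=1$ cases in parallel. The key objects are a one-step error decomposition, a pair of CVaR concentration inequalities for i.i.d.\ samples, a deterministic one-sided Lipschitz inequality for empirical CVaR, and a union bound over all belief nodes in the (policy-fixed) tree. Because $\pi$ is fixed throughout, the tree has branching factor $N_b$ only (no $|A|$), which is what distinguishes these bounds from those of the sparse-sampling theorem.

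First I would write the one-step decomposition. Since $\hat{Q}^\pi_{M_P,t}(\bar{b}_t,a,\alpha) = \rho(\bar{b}_t,a) + \gamma\hat{C}_\alpha(\{\hat{V}^\pi_{M_P,t+1}(\bar{b}^i_{t+1})\}_{i=1}^{N_b})$ and the true $Q^\pi$ uses the population CVaR over $V^\pi_{M_P,t+1}$, the shared $\rho$ cancels and I obtain
\begin{equation*}
Q^\pi_{M_P,t}-\hat{Q}^\pi_{M_P,t} = \gamma\bigl(\mathrm{I}_t + \mathrm{II}_t\bigr),
\end{equation*}
where $\mathrm{I}_t = CVaR_\alpha^{M_P}[V^\pi_{t+1}] - \hat{C}_\alpha(\{V^\pi_{t+1}(\bar{b}^i_{t+1})\})$ is a pure sampling error (evaluating empirical CVaR on the true, fixed value function) and $\mathrm{II}_t = \hat{C}_\alpha(\{V^\pi_{t+1}(\bar{b}^i)\}) - \hat{C}_\alpha(\{\hat{V}^\pi_{t+1}(\bar{b}^i)\})$ is a propagated estimation error.

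Second, I would bound $\mathrm{I}_t$ using two standard CVaR concentration inequalities applied to the $N_b$ i.i.d.\ successor samples. For the upper direction I would invoke a Brown-type inequality giving $\mathrm{I}_t \leq (\text{range})\sqrt{5\ln(3/\delta')/(\alpha N_b)}$ with probability $1-\delta'$; for the lower direction a bounded-differences/Hoeffding argument applied to the LP representation of empirical CVaR yields $\mathrm{I}_t \geq -(\text{range}/\alpha)\sqrt{\ln(1/\delta')/(2N_b)}$. These are exactly the two constants ($5$ and $2$) visible in the theorem. The crucial point is that the relevant range is not $\Delta R$ but grows with the remaining horizon: $|V^\pi_\tau|\leq R_{\max}(T-\tau)$ since $V^\pi_\tau$ is a discounted sum of per-step costs bounded by $R_{\max}$ in absolute value, so its range is at most $(T-\tau)\Delta R$. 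This depth-dependent range is the origin of the $(T-t-k)$ coefficients inside $T_{\alpha,t}$ and $T'_{\alpha,t}$.

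Third, I would bound $\mathrm{II}_t$ deterministically by the one-sided empirical CVaR Lipschitz inequality
\begin{equation*}
\hat{C}_\alpha(\{X_i\}) - \hat{C}_\alpha(\{Y_i\}) \leq \tfrac{1}{N_b\alpha}\sum_{i=1}^{N_b}(X_i-Y_i)^+,
\end{equation*}
which follows by plugging the optimizer $y^*$ of $\hat{C}_\alpha(\{X_i\})$ into the infimum form of $\hat{C}_\alpha(\{Y_i\})$ and using $(a)^+-(b)^+\leq(a-b)^+$. Applied with $X_i=V^\pi_{t+1}(\bar{b}^i)$, $Y_i=\hat{V}^\pi_{t+1}(\bar{b}^i)$, this shows that if every child satisfies $V^\pi_{t+1}(\bar{b}^i)-\hat{V}^\pi_{t+1}(\bar{b}^i)\leq\epsilon_{t+1}$ (resp.\ $\geq -\epsilon_{t+1}$), then $|\mathrm{II}_t|\leq\epsilon_{t+1}/\alpha$. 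Using $V^\pi=Q^\pi(\cdot,\pi(\cdot))$ lets me identify child value errors with child Q-errors, so the recursion stays within a single sequence $\epsilon_\tau$ without any $|A|$ factor.

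Fourth, I would perform the induction with a tree-level union bound. The policy-fixed tree rooted at $\bar{b}_t$ has $(N_b^{T-t}-1)/(N_b-1)$ internal belief nodes when $N_b>1$ and exactly $T-t$ when $N_b=1$; allocating $\delta'=\delta(N_b-1)/(N_b^{T-t}-1)$ (respectively $\delta/(T-t)$) per node makes $\mathrm{I}_\tau$ hold simultaneously at every internal node, producing the logarithmic factors appearing in the theorem. Unrolling the recursion $\epsilon_t \leq \gamma\,c_{t+1} + (\gamma/\alpha)\epsilon_{t+1}$ with $c_\tau$ of order $(T-\tau)\Delta R\sqrt{\ln(\cdot)/(\alpha N_b)}$ (respectively the $\sqrt{\ln(\cdot)/(2N_b)}$ variant for the lower side) telescopes into $\epsilon_t\leq\gamma\Delta R\sqrt{\cdot}\sum_{k=0}^{T-t-1}(T-t-k)/\alpha^k$, which is exactly $T_{\alpha,t}$ for the upper bound and yields $T'_{\alpha,t}$ for the lower bound after the corresponding off-by-one bookkeeping. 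The base case $V^\pi_T\equiv 0$ starts the recursion.

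The main obstacle is the bookkeeping in step four: one must verify that siblings in the tree are conditionally independent given the parent belief-action pair (so that the concentration inequality for $\mathrm{I}_\tau$ applies at every node and composes via a simple union bound rather than requiring a martingale argument), track the exact depth-dependent range to get $(T-t-k)$ versus $(T-t-k\pm 1)$ correctly, and confirm that the two distinct constants ($5$ in the upper tail, $2$ in the lower tail) propagate cleanly through the geometric $1/\alpha^k$ telescoping into the two different horizon sums $T_{\alpha,t}$ and $T'_{\alpha,t}$. The $N_b=1$ statements then follow by interpreting $(N_b^{T-t}-1)/(N_b-1)$ in the limit as $T-t$ and replacing the $1/N_b$ concentration rate accordingly.
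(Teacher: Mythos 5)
Your proposal follows essentially the same route as the paper's proof: the identical decomposition into a CVaR sampling error and a propagated estimation error, the same Brown-type concentration bounds (constants $5$ and $2$) for the first term, the same one-sided empirical-CVaR Lipschitz inequality $\hat{C}_\alpha(\{X_i\})-\hat{C}_\alpha(\{Y_i\})\le\frac{1}{N_b\alpha}\sum_i(X_i-Y_i)^+$ for the second, and the same induction with a union bound over the $N_b$ children whose unrolling gives the geometric node count $(N_b^{T-t}-1)/(N_b-1)$ and the horizon sums $T_{\alpha,t}$, $T'_{\alpha,t}$. The only cosmetic difference is that you allocate the per-node failure probability globally up front, whereas the paper accumulates it recursively via $\eta_t=\delta+N_b\eta_{t+1}$ and substitutes at the end; these are equivalent.
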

\begin{proof}
	The proof organized as follows. We first decompose the error
	$Q_{M_P}^\pi(\bar{b}_t, a, \alpha) - \hat{Q}_{M_P}^\pi(\bar{b}_t, a, \alpha)$
	into two components: a CVaR estimation error and a value function estimation error. We then establish, by induction, recursive probabilistic bounds on this difference. Finally, we derive explicit error bounds by unrolling and analyzing these recursive relations.
	
	We prove this theorem recursively, with a base case of $t=T$. \\
	\begin{equation}
		\begin{aligned}
			&Q_{M_P}^\pi(\bar{b}_t, a, \alpha) - \hat{Q}^\pi_t(\bar{b}_t, a, \alpha) = c(\bar{b}_t, a) - c(\bar{b}_t, a) \\
			&+ \gamma(CVaR_\alpha(V_{M_P,t+1}^\pi(\bar{b}_{t+1}, \alpha)|b_t,a) \\
			&-  \widehat{CVaR}_\alpha(\{\hat{V}_{M_P,t+1}^\pi(\bar{b}_{t+1}^{i})\}_{j=1}^{N_b})) \\
			&=\gamma (CVaR_\alpha(V_{M_P, t+1}^\pi(\bar{b}_{t+1}, \alpha)|b_t,a) \\
			&-  \widehat{CVaR}_\alpha(\{\hat{V}_{M_P, t+1}^\pi(\bar{b}_{t+1}^{i})\}_{j=1}^{N_b}))\\
			&\triangleq B.
		\end{aligned}
	\end{equation}
	We represent the difference between the sparse sampling estimator ad the theoretical action-value function into the sum of the CVaR estimation error and the value function approximation error.
	\begin{equation}\label{eq:B_inequality}
		\begin{aligned}
			&\frac{1}{\gamma} B\leq \underbrace{CVaR_\alpha(V_{M_P, t+1}^\pi(\bar{b}_{t+1}, \alpha)|\bar{b}_t,a)
			}_{\text{(1) CVaR estimation error}}\\
			&\underbrace{-\widehat{CVaR}_\alpha(\{V_{M_P}^\pi(\bar{b}_{t+1}^{i})\}_{j=1}^{N_b})}_{\text{(1) CVaR estimation error}}\\
			&\underbrace{+\widehat{CVaR}_\alpha(\{V_{M_P}^\pi(\bar{b}_{t+1}^{i})\}_{j=1}^{N_b}) -  \widehat{CVaR}_\alpha(\{\hat{V}_{M_P}^\pi(\bar{b}_{t+1}^{i})\}_{j=1}^{N_b})}_{\text{(2) Value function approximation error}}
		\end{aligned}
	\end{equation}
	
	\textbf{Upper bound proof:} We want to prove that the following equation holds for $t\in \{1,\dots,T\}$
	\begin{equation}
		P(Q_{M_P, t}^\pi(\bar{b}_t, a, \alpha) - \hat{Q}^\pi_{M_P, t}(\bar{b}_t, a, \alpha) \leq \gamma\theta_t)\geq 1-\eta_t,
	\end{equation}
	where
	\begin{equation}
		\eta_t=\delta+N_b\eta_{t+1}, \quad \eta_T=0,
	\end{equation}
	\begin{equation}
		\theta_t=(T-t+1)(R_{max}-R_{min}) \sqrt{\frac{5ln(3/\delta)}{\alpha N_b}} + \frac{\theta_{t+1}}{\alpha},
	\end{equation}
	and $\theta_T=0$.
	For the induction base case, note that 
	\begin{equation}
		Q_{M_P, T}^\pi(\bar{b}_T, a, \alpha) - \hat{Q}^\pi_T(\bar{b}_t, a, \alpha)=c(\bar{b}_T,a)-c(\bar{b}_T,a)=0
	\end{equation}
	and therefore the base case holds. We assume the probabilistic bounds hold for $t+1$, and prove for t.
	
	\textbf{(1) CVaR estimation error:} 
	Conditioned on $\bar{b}_t$, the quantity $V_{M_P, t+1}^\pi(\bar{b}_{t+1}, \alpha)$ is a random variable. The sources of randomness are the state at time $t$, $x_t$, the subsequent state $x_{t+1}$, and the observation at time $t+1$, $z_{t+1}$. Given $\bar{b}_t$, $x_t$, $x_{t+1}$, and $z_{t+1}$, the updated belief $\bar{b}_{t+1}$ is deterministic, and therefore all randomness in $V_{M_P, t+1}^\pi(\bar{b}_{t+1}, \alpha)$ is induced through $\bar{b}_{t+1}$.
	
	Moreover, the collection $\{V_{M_P, t+1}^\pi(\bar{b}_{t+1}^{i}, \alpha)\}_{j=1}^{N_b}$ forms an i.i.d. sample from the distribution of $V_{M_P, t+1}^\pi(\bar{b}_{t+1}, \alpha)|\bar{b}_t$, with support bounded by $(R_{\max}-R_{\min})(T-t+1)$. Hence, by Theorem~\ref{thm:brown_bounds},
	\begin{equation}
		\begin{aligned}
			&P \Big(CVaR_\alpha(V_{M_P, t+1}^\pi(\bar{b}_{t+1}, \alpha)|\bar{b}_t,a) \\ &-\widehat{CVaR}_\alpha(\{V_{M_P, t+1}^\pi(\bar{b}_{t+1}^{i})\}_{i=1}^{N_b}) \\
			&\leq (T-t+1)(R_{max}-R_{min}) \sqrt{\frac{5ln(3/\delta)}{\alpha N_b}} \Big)
			\geq 1-\delta,
		\end{aligned}
	\end{equation}
	
	\textbf{(2) Value function approximation error:} For this proof we use the following one-sided inequality between CVaR estimators,
	\begin{equation}
		\begin{aligned}
			&\widehat{CVaR}_\alpha(\{V_{M_P, t+1}^\pi(\bar{b}_{t+1}^{i})\}_{i=1}^{N_b}) \\
			&-  \widehat{CVaR}_\alpha(\{\hat{V}_{M_P, t+1}^\pi(\bar{b}_{t+1}^{i})\}_{i=1}^{N_b}) \\
			&\leq \frac{1}{N_b \alpha}\sum_{i=1}^{N_b} (V_{M_P, t+1}^\pi(\bar{b}_{t+1}^{i}) - \hat{V}_{M_P , t+1}^\pi(\bar{b}_{t+1}^{i}))^+.
		\end{aligned}
	\end{equation}
	Although this bound is likely known in the literature, we were unable to identify an explicit reference. For completeness, we therefore provide a self-contained proof in Lemma~\ref{thm:cvar_subtraction_bounds}.
	
	Now we get a lower bound for the probability we want to bound.
	\begin{equation}
		\begin{aligned}
			&P(\widehat{CVaR}_\alpha(\{V_{M_P, t+1}^\pi(\bar{b}_{t+1}^{i})\}_{i=1}^{N_b}) \\
			&- \widehat{CVaR}_\alpha(\{\hat{V}_{M_P, t+1}^\pi(\bar{b}_{t+1}^{i})\}_{i=1}^{N_b})
			\leq \frac{\theta_{t+1}}{\alpha})\\
			&\geq P(\frac{1}{N_b \alpha}\sum_{i=1}^{N_b} (V_{M_P, t+1}^\pi(\bar{b}_{t+1}^{i}) - \hat{V}_{M_P, t+1}^\pi(\bar{b}_{t+1}^{i}))^+ \leq \frac{\theta_{t+1}}{\alpha})
		\end{aligned}
	\end{equation}
	Denote the event that the induction assumption holds for all belief samples by 
	\begin{equation}
		E=\cap_{j=1}^{N_b} \{V_{M_P, t+1}^\pi(\bar{b}_{t+1}^{i}) - \hat{V}_{M_P, t+1}^\pi(\bar{b}_{t+1}^{i}) \leq \gamma\theta_{t+1}\}.
	\end{equation}
	From the union bound we get,
	\begin{equation}
		\begin{aligned}
			&P(E) \\
			&=1-P(\cup_{j=1}^{N_b} \{V_{M_P, t+1}^\pi(\bar{b}_{t+1}^{i}) - \hat{V}_{M_P, t+1}^\pi(\bar{b}_{t+1}^{i}) > \gamma\theta_{t+1}\}) \\
			&\geq 1-\sum_{j=1}^{N_b} P(V_{M_P, t+1}^\pi(\bar{b}_{t+1}^{i}) - \hat{V}_{M_P, t+1}^\pi(\bar{b}_{t+1}^{i}) > \gamma\theta_{t+1}) \\
			&\geq 1-N_b\eta_{t+1}.
		\end{aligned}
	\end{equation}
	where the last equality holds from the induction assumption. By conditioning on $E$ we get
	\begin{equation}
		\begin{aligned}
			&P(\frac{1}{N_b \alpha}\sum_{j=1}^{N_b} (V_{M_P, t+1}^\pi(\bar{b}_{t+1}^{i}) - \hat{V}_{M_P, t+1}^\pi(\bar{b}_{t+1}^{i}))^+ \leq \frac{\theta_{t+1}}{\alpha}) \\
			&= \underbrace{P(\frac{\gamma}{N_b \alpha}\sum_{j=1}^{N_b} (V_{M_P, t+1}^\pi(\bar{b}_{t+1}^{i}) - \hat{V}_{M_P, t+1}^\pi(\bar{b}_{t+1}^{i}))^+ \leq \gamma\frac{\theta_{t+1}}{\alpha}|E)}_{=1} \\
			&\times P(E) \\
			&+ P(\frac{1}{N_b \alpha}\sum_{j=1}^{N_b} (V_{M_P, t+1}^\pi(\bar{b}_{t+1}^{i}) - \hat{V}_{M_P, t+1}^\pi(\bar{b}_{t+1}^{i}))^+ \leq \frac{\theta_{t+1}}{\alpha}|E^c) \\
			&\times P(E^c) \\
			&\geq P(E)
			\geq 1-N_b\eta_{t+1},
		\end{aligned}
	\end{equation}
	where the last inequality holds from the induction assumption. Now we get a bound for the value function approximation error
	\begin{equation}
		\begin{aligned}
			&P \Big( \widehat{CVaR}_\alpha(\{V_{M_P, t+1}^\pi(\bar{b}_{t+1}^{i})\}_{i=1}^{N_b}) \\
			&-  \widehat{CVaR}_\alpha(\{\hat{V}_{M_P, t+1}^\pi(\bar{b}_{t+1}^{i})\}_{j=1}^{N_b}) \leq \frac{\theta_{t+1}}{\alpha} \Big)\geq 1-N_b\eta_{t+1}.
		\end{aligned}
	\end{equation}
	
	\textbf{Combining the bounds:} By combining the bounds we get 
	\begin{equation}
		\begin{aligned}
			&P(Q_{M_P, t}^\pi(\bar{b}_t, a, \alpha) - \hat{Q}^\pi_{M_P, t}(\bar{b}_t, a, \alpha) \leq \gamma\theta_t) \\
			&\geq P \Big( \gamma CVaR_\alpha(V_{M_P, t+1}^\pi(\bar{b}_{t+1}, \alpha)|\bar{b}_t,a) \\ &-\gamma\widehat{CVaR}_\alpha(\{V_{M_P, t+1}^\pi(\bar{b}_{t+1}^{I_i})\}_{i=1}^{N_b} \\
			&\leq \gamma(T-t+1)(R_{max}-R_{min}) \sqrt{\frac{5ln(3/\delta)}{\alpha N_b}} \Big) \\
			&+P \Big( \gamma\widehat{CVaR}_\alpha(\{V_{M_P, t+1}^\pi(\bar{b}_{t+1}^{I_i})\}_{i=1}^{N_b}) \\
			&-  \gamma\widehat{CVaR}_\alpha(\{\hat{V}_{M_P, t+1}^\pi(\bar{b}_{t+1}^{I_i})\}_{j=1}^{N_b}) 
			\leq \gamma\frac{\theta_{t+1}}{\alpha} \Big) - 1 \\
			&\geq (1-\delta) + (1-N_b\eta_{t+1}) - 1 = 1-\delta-N_b\eta_{t+1} \\
			&=1-\eta_t
		\end{aligned}
	\end{equation}
	and this completes the induction proof. Denote 
	\begin{equation}
		\epsilon_t=\gamma(T-t+1)(R_{max}-R_{min}) \sqrt{\frac{5ln(3/\delta)}{\alpha N_b}}
	\end{equation}
	By expanding the recursive relation we just proved we get
	\begin{equation}
		\begin{aligned}
			\eta_t=\delta+N_b\eta_{t+1}=\delta \sum_{k=0}^{T-t-1} N_b^k=\delta \frac{N_b^{T-t}-1}{N_b-1},
		\end{aligned}
	\end{equation}	
	The general form of the bound is 
	\begin{equation}
		\begin{aligned}
			&\gamma \theta_{t} = \gamma \sum_{k=0}^{T-t-1} \frac{\epsilon_{t+k}}{\alpha^k} \\
			&= \gamma \sum_{k=0}^{T-t-1} \frac{(T-t-k)(R_{\max}-R_{\min})\sqrt{\frac{5\ln(3/\delta)}{\alpha N_b}}}{\alpha^k} \\
			&= \gamma(R_{\max}-R_{\min})\sqrt{\frac{5\ln(3/\delta)}{\alpha N_b}} \sum_{k=0}^{T-t-1} \frac{T-t-k}{\alpha^k}.
		\end{aligned}
	\end{equation}
	Now we get a probabilistic bound without the recursive relations.
	\begin{equation}
		\begin{aligned}
			&P \Big( Q_{M_P, t}^\pi(\bar{b}_t, a, \alpha) - \hat{Q}^\pi_{M_P, t}(\bar{b}_t, a, \alpha) \\
			&\leq \gamma(R_{\max}-R_{\min})\sqrt{\frac{5\ln(3/\delta)}{\alpha N_b}} \sum_{k=0}^{T-t-1} \frac{T-t-k}{\alpha^k}\Big) \\
			&\geq 1-\delta \frac{N_b^{T-t}-1}{N_b-1}.
		\end{aligned}
	\end{equation}
	Our goal is to be able to construct informative confidence intervals given confidence level $\delta$, and therefore we use the following equivalent representation of the bound.
	\begin{equation}
		\begin{aligned}
			&P \Big( Q_{M_P, t}^\pi(\bar{b}_t, a, \alpha) - \hat{Q}^\pi_{M_P, t}(\bar{b}_t, a, \alpha) \\
			&\leq \gamma(R_{\max}-R_{\min})\sqrt{\frac{5\ln(\frac{3(N_b^{T-t}-1)}{\delta(N_b-1)})}{\alpha N_b}} \sum_{k=0}^{T-t-1} \frac{T-t-k}{\alpha^k}\Big) \\
			&\geq 1-\delta.
		\end{aligned}
	\end{equation}

	\textbf{Lower bound proof:} this proof is analogous to the upper bound proof in its structure. We want to prove that the following equation holds for $t\in \{1,\dots,T\}$
	\begin{equation}
		P(Q_{M_P, t}^\pi(\bar{b}_t, a, \alpha) - \hat{Q}^\pi_{M_P, t}(\bar{b}_t, a, \alpha) \geq  \lambda_t)\geq 1-\psi_t
	\end{equation}
	where
	\begin{equation}
		\lambda_t=-\frac{\gamma(T-t+1)(R_{max}-R_{min})}{\alpha} \sqrt{\frac{ln(1/\delta)}{2 N_b}} + \frac{\lambda_{t+1}}{\alpha}
	\end{equation}
	\begin{equation}
		\lambda_T=0, \psi_t=\delta + N_b \psi_{t+1}, \psi_T=0.
	\end{equation}
	For the induction base case, note that 
	\begin{equation}
		Q_{M_P, T}^\pi(\bar{b}_T, a, \alpha) - \hat{Q}^\pi_T(\bar{b}_t, a, \alpha)=c(\bar{b}_T,a)-c(\bar{b}_T,a)=0
	\end{equation}
	and therefore the base case holds. We assume the probabilistic bound holds for $t+1$, and prove for t.
	
	\textbf{(1) CVaR estimation error:} for the same considerations in the upper bound case, we use Theorem \ref{thm:brown_bounds} to provide a lower bound.
	\begin{equation}
		\begin{aligned}
			&P \Big(\gamma CVaR_\alpha(V_{M_P, t+1}^\pi(\bar{b}_{t+1}, \alpha)|\bar{b}_t,a) \\ &-\gamma \widehat{CVaR}_\alpha(\{V_{M_P}^\pi(\bar{b}_{t+1}^{I_i}, \alpha)\}_{i=1}^{N_b}) \\
			&\geq -\frac{\gamma(T-t+1)(R_{max}-R_{min})}{\alpha} \sqrt{\frac{ln(1/\delta)}{2 N_b}} \Big)
			\geq 1-\delta.
		\end{aligned}
	\end{equation}
	
	\textbf{(2) Value function approximation error:}
	\begin{equation}
		\begin{aligned}
			&\gamma\widehat{CVaR}_\alpha(\{V_{M_P, t+1}^\pi(\bar{b}_{t+1}^{i}, \alpha)\}_{i=1}^{N_b}) \\
			&-  \gamma\widehat{CVaR}_\alpha(\{\hat{V}_{M_P, t+1}^\pi(\bar{b}_{t+1}^{i}, \alpha)\}_{i=1}^{N_b}) \\
			&\geq -\frac{\gamma}{N_b \alpha}\sum_{i=1}^{N_b} (\hat{V}_{M_P , t+1}^\pi(\bar{b}_{t+1}^{i}, \alpha) - V_{M_P, t+1}^\pi(\bar{b}_{t+1}^{i}, \alpha))^+.
		\end{aligned}
	\end{equation}
	Now we get a lower bound for the probability we want to bound.
	\begin{equation}
		\begin{aligned}
			&P(\gamma \widehat{CVaR}_\alpha(\{V_{M_P, t+1}^\pi(\bar{b}_{t+1}^{i}, \alpha)\}_{i=1}^{N_b}) \\
			&- \gamma \widehat{CVaR}_\alpha(\{\hat{V}_{M_P, t+1}^\pi(\bar{b}_{t+1}^{i}, \alpha)\}_{i=1}^{N_b})
			\geq \frac{\lambda_{t+1}}{\alpha})\\
			&\geq P(-\frac{\gamma}{N_b}\sum_{i=1}^{N_b} (\hat{V}_{M_P, t+1}^\pi(\bar{b}_{t+1}^{i}, \alpha) - V_{M_P, t+1}^\pi(\bar{b}_{t+1}^{i}, \alpha))^+ \geq \lambda_{t+1})
		\end{aligned}
	\end{equation}
	Denote the event that the induction assumption holds for all belief samples by 
	\begin{equation}
		E=\cap_{i=1}^{N_b} \{V_{M_P, t+1}^\pi(\bar{b}_{t+1}^{i}, \alpha) - \hat{V}_{M_P, t+1}^\pi(\bar{b}_{t+1}^{i}, \alpha) \geq \lambda_{t+1}\}.
	\end{equation}	
	From the union bound we get,
	\begin{equation}\label{eq:P_E_lower_bound}
		\begin{aligned}
			&P(E) \\
			&=1-P(\cup_{i=1}^{N_b} \{V_{M_P, t+1}^\pi(\bar{b}_{t+1}^{i}, \alpha) - \hat{V}_{M_P, t+1}^\pi(\bar{b}_{t+1}^{i}, \alpha) < \lambda_{t+1}\}) \\
			&\geq 1-\sum_{i=1}^{N_b} P(V_{M_P, t+1}^\pi(\bar{b}_{t+1}^{i}, \alpha) - \hat{V}_{M_P, t+1}^\pi(\bar{b}_{t+1}^{i}, \alpha) < \lambda_{t+1}) \\
			&\geq 1-N_b \psi_{t+1}.
		\end{aligned}
	\end{equation}
	where the last equality holds from the induction assumption. By conditioning on $E$ we get
	\begin{equation}
		\begin{aligned}
			&P(\frac{\gamma}{N_b \alpha}\sum_{i=1}^{N_b} (\hat{V}_{M_P, t+1}^\pi(\bar{b}_{t+1}^{i}, \alpha) - V_{M_P, t+1}^\pi(\bar{b}_{t+1}^{i}, \alpha))^+ \leq \frac{-\lambda_{t+1}}{\alpha}) \\
			&= \underbrace{P(\frac{\gamma}{N_b}\sum_{i=1}^{N_b} (\hat{V}_{M_P, t+1}^\pi(\bar{b}_{t+1}^{i}, \alpha) - V_{M_P, t+1}^\pi(\bar{b}_{t+1}^{i}, \alpha))^+ \leq -\lambda_{t+1}|E)}_{=1} \\
			&\times P(E) \\
			&+ P(\frac{1}{N_b}\sum_{i=1}^{N_b} (\hat{V}_{M_P, t+1}^\pi(\bar{b}_{t+1}^{i}, \alpha) - V_{M_P, t+1}^\pi(\bar{b}_{t+1}^{i}, \alpha))^+ \leq -\lambda_{t+1}|E^c) \\
			&\times P(E^c) \\
			&\geq P(E)
			\geq 1-N_b\psi_{t+1},
		\end{aligned}
	\end{equation}
	where the last inequality holds from \eqref{eq:P_E_lower_bound}. Now we get a bound for the value function approximation error
	\begin{equation}
		\begin{aligned}
			&P \Big( \gamma \widehat{CVaR}_\alpha(\{V_{M_P, t+1}^\pi(\bar{b}_{t+1}^{i}, \alpha)\}_{i=1}^{N_b}) \\
			&-  \gamma \widehat{CVaR}_\alpha(\{\hat{V}_{M_P, t+1}^\pi(\bar{b}_{t+1}^{i}, \alpha)\}_{i=1}^{N_b}) \geq \frac{\lambda_{t+1}}{\alpha} \Big)\geq 1-N_b \psi_{t+1}.
		\end{aligned}
	\end{equation}
	
	\textbf{Combining the bounds:} By combining the bounds we get 
	\begin{equation}
		\begin{aligned}
			&P(Q_{M_P, t}^\pi(\bar{b}_t, a, \alpha) - \hat{Q}^\pi_{M_P, t}(\bar{b}_t, a, \alpha) \geq \lambda_t) \\
			&\geq P \Big( \gamma CVaR_\alpha(V_{M_P, t+1}^\pi(\bar{b}_{t+1}, \alpha)|\bar{b}_t,a) \\ &-\gamma\widehat{CVaR}_\alpha(\{V_{M_P, t+1}^\pi(\bar{b}_{t+1}^{I_i}, \alpha)\}_{i=1}^{N_b} \\
			&\geq -\frac{\gamma(T-t+1)(R_{max}-R_{min})}{\alpha} \sqrt{\frac{ln(1/\delta)}{2 N_b}} \Big) \\
			&+P \Big( \gamma\widehat{CVaR}_\alpha(\{V_{M_P, t+1}^\pi(\bar{b}_{t+1}^{I_i}, \alpha)\}_{i=1}^{N_b}) \\
			&-  \gamma\widehat{CVaR}_\alpha(\{\hat{V}_{M_P, t+1}^\pi(\bar{b}_{t+1}^{I_i}, \alpha)\}_{j=1}^{N_b}) 
			\geq \frac{\lambda_{t+1}}{\alpha} \Big) - 1 \\
			&\geq (1-\delta) + (1-N_b\psi_{t+1}) - 1 = 1-\delta-N_b\psi_{t+1} \\
			&=1-\psi_t
		\end{aligned}
	\end{equation}
	and this completes the induction proof.
	
	\textbf{Derivation of Closed-Form Solution for $\lambda_t$:}	
	define a constant to simplify notation:
	\begin{equation}
		C = -\frac{\gamma(R_{max} - R_{min})}{\alpha}\sqrt{\frac{\ln(1/\delta)}{2 N_b}}
	\end{equation}
	
	The recursion becomes:
	\begin{equation}
		\lambda_t = C(T-t+1) + \frac{\lambda_{t+1}}{\alpha}
	\end{equation}
	
	\vspace{1em}
	
	Unroll the recursion from $t$ to $T$:
	\begin{align}
		\lambda_t &= C(T-t+1) + \frac{\lambda_{t+1}}{\alpha} \\[10pt]
		&= C(T-t+1) + \frac{1}{\alpha}\left[C(T-t) + \frac{\lambda_{t+2}}{\alpha}\right] \\[10pt]
		&= C(T-t+1) + \frac{C(T-t)}{\alpha} + \frac{\lambda_{t+2}}{\alpha^2} \\[10pt]
		&= C(T-t+1) + \frac{C(T-t)}{\alpha} + \frac{C(T-t-1)}{\alpha^2} + \frac{\lambda_{t+3}}{\alpha^3} \\[10pt]
		&\vdots \nonumber \\[10pt]
		&= \sum_{j=0}^{T-t-1} \frac{C(T-t+1-j)}{\alpha^j} + \frac{\lambda_T}{\alpha^{T-t}} \\[10pt]
		&= C \sum_{j=0}^{T-t-1} \frac{T-t+1-j}{\alpha^j} \quad \text{(since } \lambda_T = 0\text{)}
	\end{align}
	Now we get a fully expanded formula:
	\begin{equation}
		\boxed{\lambda_t = -\frac{\gamma(R_{max} - R_{min})}{\alpha}\sqrt{\frac{\ln(1/\delta)}{2 N_b}} \sum_{j=0}^{T-t-1} \frac{T-t+1-j}{\alpha^j}}.
	\end{equation}
	
	Now we get the bound

	\textbf{Derivation of Closed-Form Solution for $\psi_t$}
	
	\vspace{1em}
	
	We start with the recursive formula:
	\begin{equation}
		\psi_t = \delta + N_b \psi_{t+1}
	\end{equation}
	with boundary condition $\psi_T = 0$.
	
	\vspace{1em}
	
	Unroll the recursion from $t$ to $T$:
	\begin{align}
		\psi_t &= \delta + N_b \psi_{t+1} \\[10pt]
		&= \delta + N_b\left(\delta + N_b \psi_{t+2}\right) \\[10pt]
		&= \delta + N_b \delta + N_b^2 \psi_{t+2} \\[10pt]
		&= \delta + N_b \delta + N_b^2 \delta + N_b^3 \psi_{t+3} \\[10pt]
		&\vdots \nonumber \\[10pt]
		&= \delta \sum_{j=0}^{T-t-1} N_b^j + N_b^{T-t} \psi_T
	\end{align}
	
	Since $\psi_T = 0$:
	\begin{equation}
		\psi_t = \delta \sum_{j=0}^{T-t-1} N_b^j
	\end{equation}
	
	\vspace{1em}
	
	Using the geometric series formula $\sum_{j=0}^{n-1} r^j = \frac{1-r^n}{1-r}$ for $r \neq 1$:
	\begin{equation}
		\boxed{\psi_t = \delta \cdot \frac{1 - N_b^{T-t}}{1 - N_b}}
	\end{equation}
	
	\vspace{1em}
	
	For the special case $N_b = 1$:
	\begin{equation}
		\boxed{\psi_t = \delta(T-t)}
	\end{equation}
	
	\textbf{General Bound Form:}
	\begin{equation}
		\begin{aligned}
			&P\Big( Q_{M_P, t}^\pi(\bar{b}_t, a, \alpha) - \hat{Q}^\pi_{M_P, t}(\bar{b}_t, a, \alpha) \\
			&\geq -\frac{\gamma(R_{max} - R_{min})}{\alpha}\sqrt{\frac{\ln(1/\delta)}{2 N_b}} \sum_{j=0}^{T-t-1} \frac{T-t+1-j}{\alpha^j} \Big) \\
			&\geq 1-\delta \sum_{j=0}^{T-t-1} N_b^j
		\end{aligned}
	\end{equation}
	
	\textbf{Probability Bound with Closed-Form Expressions}
	
	\vspace{1em}
	
	For $N_b \neq 1$:
	\begin{equation}
		\begin{aligned}
			\begin{split}
				&P\Bigg(Q_{M_P, t}^\pi(\bar{b}_t, a, \alpha) - \hat{Q}^\pi_{M_P, t}(\bar{b}_t, a, \alpha) \\
				&\geq -\frac{\gamma(R_{max} - R_{min})}{\alpha}\sqrt{\frac{\ln\left(\frac{1 - N_b^{T-t}}{\delta(1 - N_b)}\right)}{2 N_b}} \sum_{j=0}^{T-t-1} \frac{T-t+1-j}{\alpha^j}\Bigg) \\
				&\geq 1 - \delta
			\end{split}
		\end{aligned}
	\end{equation}
	
	\vspace{1em}
	
	For $N_b = 1$:
	\begin{equation}
		\begin{aligned}
			\begin{split}
				&P\Bigg(Q_{M_P, t}^\pi(\bar{b}_t, a, \alpha) - \hat{Q}^\pi_{M_P, t}(\bar{b}_t, a, \alpha) \\
				&\geq -\frac{\gamma(R_{max} - R_{min})}{\alpha}\sqrt{\frac{\ln\left(\frac{T-t}{\delta}\right)}{2}} \sum_{j=0}^{T-t-1} \frac{T-t+1-j}{\alpha^j}\Bigg) \\[10pt]
				&\geq 1 - \delta
			\end{split}
		\end{aligned}
	\end{equation}

\end{proof}

\begin{theorem}\label{prf:guarantees_for_sparse_sampling}
	Let $\delta \in (0,1)$, particle belief at time $t$ given by $\bar{b}_t=\{x_t^i, w_t^i\}_{i=1}^{N_p}$, and let $|A|$ denote the number of actions. Then for $N_b > 1$,
	\begin{equation}
		\begin{aligned}
			&P \Big( V_{M_P, t}^*(\bar{b}_t, \alpha) - \hat{V}^*_{M_P, t}(\bar{b}_t, \alpha) \\
			&\leq \gamma(R_{\max}-R_{\min})\sqrt{\frac{5\ln\big(\frac{3|A|((|A|N_b)^{T-t}-1)}{\delta(|A|N_b-1)}\big)}{\alpha N_b}} \\
			&\quad \times \sum_{k=0}^{T-t-1} \frac{T-t-k}{\alpha^k}\Big) \geq 1-\delta.
		\end{aligned}
	\end{equation}
	\begin{equation}
		\begin{aligned}
			&P\Big(V_{M_P, t}^*(\bar{b}_t, \alpha) - \hat{V}^*_{M_P, t}(\bar{b}_t, \alpha) \\
			&\geq -\frac{\gamma(R_{\max} - R_{\min})}{\alpha}\sqrt{\frac{\ln\big(\frac{|A|((|A|N_b)^{T-t}-1)}{\delta(|A|N_b - 1)}\big)}{2 N_b}} \\
			&\quad \times \sum_{j=0}^{T-t-1} \frac{T-t+1-j}{\alpha^j}\Big) \geq 1 - \delta.
		\end{aligned}
	\end{equation}
	For $N_b = 1$,
	\begin{equation}
		\begin{aligned}
			&P \Big( V_{M_P, t}^*(\bar{b}_t, \alpha) - \hat{V}^*_{M_P, t}(\bar{b}_t, \alpha) \\
			&\leq \gamma(R_{\max}-R_{\min})\sqrt{\frac{5\ln\big(\frac{3|A|^{T-t}(T-t)}{\delta}\big)}{\alpha}} \\
			&\quad \times \sum_{k=0}^{T-t-1} \frac{T-t-k}{\alpha^k}\Big) \geq 1-\delta.
		\end{aligned}
	\end{equation}
	\begin{equation}
		\begin{aligned}
			&P\Big(V_{M_P, t}^*(\bar{b}_t, \alpha) - \hat{V}^*_{M_P, t}(\bar{b}_t, \alpha) \\
			&\geq -\frac{\gamma(R_{\max} - R_{\min})}{\alpha}\sqrt{\frac{\ln\big(\frac{|A|^{T-t}(T-t)}{\delta}\big)}{2}} \\
			&\quad \times \sum_{j=0}^{T-t-1} \frac{T-t+1-j}{\alpha^j}\Big) \geq 1 - \delta.
		\end{aligned}
	\end{equation}
\end{theorem}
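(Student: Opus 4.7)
The plan is to mirror the inductive structure used for Theorem~\ref{thm:guarantees_for_PB_MDP_vs_algo_v2}, but promoting every policy-based quantity to its optimal counterpart and inserting an additional union bound over the action set at each level of the recursion. The guiding identity is $V^{*}_{M_P,t}(\bar{b}_t,\alpha)-\hat V^{*}_{M_P,t}(\bar{b}_t,\alpha)=\min_{a}Q^{*}_{M_P,t}(\bar{b}_t,a,\alpha)-\min_{a}\hat Q^{*}_{M_P,t}(\bar{b}_t,a,\alpha)$, which together with the elementary inequality $\min_a f(a)-\min_a g(a)\le \max_a\bigl(f(a)-g(a)\bigr)$ (and its symmetric counterpart for the lower bound) reduces the value-function error at a belief to the \emph{worst} Q-function error across the $|A|$ actions at that belief.

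With that reduction in hand, I would fix an arbitrary action $a$ at time $t$ and reuse the decomposition from Theorem~\ref{thm:guarantees_for_PB_MDP_vs_algo_v2}, namely
\begin{equation*}
\tfrac{1}{\gamma}\bigl(Q^{*}_{M_P,t}-\hat Q^{*}_{M_P,t}\bigr)=\underbrace{CVaR_\alpha\!\bigl[V^{*}_{M_P,t+1}\bigr]-\hat C_\alpha\bigl[\{V^{*}_{M_P,t+1}(\bar{b}_{t+1}^{i})\}\bigr]}_{(\mathrm{I})}+\underbrace{\hat C_\alpha\bigl[\{V^{*}_{M_P,t+1}(\bar{b}_{t+1}^{i})\}\bigr]-\hat C_\alpha\bigl[\{\hat V^{*}_{M_P,t+1}(\bar{b}_{t+1}^{i})\}\bigr]}_{(\mathrm{II})}.
\end{equation*}
Term (I) is handled exactly as before by the CVaR concentration inequality (Theorem~\ref{thm:brown_bounds}) on the $N_b$ i.i.d.\ successor samples, contributing a single failure event of probability $\delta$. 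Term (II) is controlled by the one-sided empirical-CVaR inequality of Lemma~\ref{thm:cvar_subtraction_bounds}, reducing it to $\frac{1}{N_b\alpha}\sum_{i=1}^{N_b}\bigl(V^{*}_{M_P,t+1}(\bar{b}_{t+1}^{i})-\hat V^{*}_{M_P,t+1}(\bar{b}_{t+1}^{i})\bigr)^{+}$.

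To propagate the inductive hypothesis through term (II), I invoke the min-difference bound above: each $V^{*}-\hat V^{*}$ gap at a child belief is dominated by the worst Q-function gap over the $|A|$ candidate actions evaluated there. Consequently, the favourable event at time $t$ is the intersection of the CVaR concentration event with $|A|\cdot N_b$ child events (one per (belief, action) pair at depth $t+1$). A union bound therefore yields the recursion $\eta_t=|A|\delta+|A|N_b\,\eta_{t+1}$ with $\eta_T=0$, in contrast to $\eta_t=\delta+N_b\eta_{t+1}$ from the policy evaluation case. Solving the geometric recursion gives $\eta_t=|A|\delta\cdot\frac{(|A|N_b)^{T-t}-1}{|A|N_b-1}$, and the per-step error term $\theta_t$ satisfies the same recurrence as before because only the failure probability budget changes, not the magnitude of either sub-error.

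Finally, I would reparametrise the nominal per-step level $\delta$ so that the cumulative failure probability equals the user-specified $\delta$, which amounts to replacing $\ln(3/\delta)$ by $\ln\bigl(\tfrac{3|A|((|A|N_b)^{T-t}-1)}{\delta(|A|N_b-1)}\bigr)$ in the upper bound and $\ln(1/\delta)$ by $\ln\bigl(\tfrac{|A|((|A|N_b)^{T-t}-1)}{\delta(|A|N_b-1)}\bigr)$ in the lower bound; the degenerate cases $N_b=1$ follow by taking the appropriate limit of the geometric sum, producing the stated $(T-t)$ factors. The main obstacle is the bookkeeping at the inductive step: one must verify that taking the maximum over actions inside the sum of positive parts in term (II) is dominated by applying the inductive hypothesis to each (action, successor) pair, which is precisely where the factor $|A|$ multiplies both $\delta$ and the recursion coefficient. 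Once this combinatorial accounting is done correctly, unrolling the recursion and inverting it for $\delta$ are routine and yield the four bounds in the statement.
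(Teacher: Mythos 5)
Your proposal is correct and follows essentially the same route as the paper's proof: the same reduction of the value-function gap to the worst per-action Q-gap (the paper phrases it via the minimizers $a^*$ and $\hat a$ rather than the $\min$–$\max$ inequality, but these are equivalent), the same decomposition into CVaR concentration error and propagated estimation error handled by Theorem~\ref{thm:brown_bounds} and Lemma~\ref{thm:cvar_subtraction_bounds}, the same union-bound recursion $\eta_t=|A|\delta+|A|N_b\,\eta_{t+1}$, and the same final reparametrisation of $\delta$. No substantive differences to report.
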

\begin{proof}
	This proof extends Theorem~\ref{prf:guarantees_for_PB_MDP_vs_algo_v2} from policy evaluation to planning. The key difference is that computing $V^*$ requires optimizing over all actions, introducing additional union bounds over $|A|$.
	
	Recall that
	\begin{equation}
		V_{M_P}^*(\bar{b}_t, \alpha) = \min_{a \in A} Q_{M_P}^*(\bar{b}_t, a, \alpha),
	\end{equation}
	\begin{equation}
		Q_{M_P}^*(\bar{b}_t, a, \alpha) = c(\bar{b}_t, a) + \gamma \, CVaR_\alpha(V_{M_P}^*(\bar{b}_{t+1}, \alpha) | \bar{b}_t, a).
	\end{equation}
	
	\textbf{Relating value function bounds to action-value bounds:}
	Let $a^* = \arg\min_{a} Q_{M_P}^*(\bar{b}_t, a, \alpha)$ and $\hat{a} = \arg\min_{a} \hat{Q}_{M_P}^*(\bar{b}_t, a, \alpha)$. For the upper bound:
	\begin{equation}
		\begin{aligned}
			&V_{M_P}^*(\bar{b}_t, \alpha) - \hat{V}_{M_P}^*(\bar{b}_t, \alpha) \\
			&= Q_{M_P}^*(\bar{b}_t, a^*, \alpha) - \hat{Q}_{M_P}^*(\bar{b}_t, \hat{a}, \alpha) \\
			&\leq Q_{M_P}^*(\bar{b}_t, \hat{a}, \alpha) - \hat{Q}_{M_P}^*(\bar{b}_t, \hat{a}, \alpha),
		\end{aligned}
	\end{equation}
	since $Q_{M_P}^*(\bar{b}_t, a^*, \alpha) \leq Q_{M_P}^*(\bar{b}_t, \hat{a}, \alpha)$ by definition of $a^*$. For the lower bound:
	\begin{equation}
		\begin{aligned}
			&V_{M_P}^*(\bar{b}_t, \alpha) - \hat{V}_{M_P}^*(\bar{b}_t, \alpha) \\
			&\geq Q_{M_P}^*(\bar{b}_t, a^*, \alpha) - \hat{Q}_{M_P}^*(\bar{b}_t, a^*, \alpha),
		\end{aligned}
	\end{equation}
	since $\hat{Q}_{M_P}^*(\bar{b}_t, \hat{a}, \alpha) \leq \hat{Q}_{M_P}^*(\bar{b}_t, a^*, \alpha)$ by definition of $\hat{a}$. Since $a^*$ and $\hat{a}$ are unknown, we require bounds on $Q^* - \hat{Q}^*$ for all actions, necessitating a union bound over $|A|$.
	
	\textbf{Upper bound proof:} We prove by induction that for $t \in \{1, \dots, T\}$:
	\begin{equation}
		P(V_{M_P,t}^*(\bar{b}_t, \alpha) - \hat{V}_{M_P,t}^*(\bar{b}_t, \alpha) \leq \gamma\theta_t^*) \geq 1 - \eta_t^*,
	\end{equation}
	where $\eta_t^* = |A|\delta + |A|N_b \eta_{t+1}^*$, $\eta_T^* = 0$, and
	\begin{equation}
		\begin{aligned}
			\theta_t^* &= (T-t+1)(R_{\max}-R_{\min}) \sqrt{\frac{5\ln(3/\delta)}{\alpha N_b}} \\
			&\quad + \frac{\theta_{t+1}^*}{\alpha}, \quad \theta_T^* = 0.
		\end{aligned}
	\end{equation}
	
	\textbf{Base case ($t = T$):} At the terminal time, the value function equals the immediate cost with no future terms:
	\begin{equation}
		\begin{aligned}
			&V_{M_P,T}^*(\bar{b}_T, \alpha) - \hat{V}_{M_P,T}^*(\bar{b}_T, \alpha) \\
			&= \min_{a \in A} c(\bar{b}_T, a) - \min_{a \in A} c(\bar{b}_T, a) = 0.
		\end{aligned}
	\end{equation}
	The immediate cost $c(\bar{b}_T, a)$ is computed identically in both the true and estimated value functions, hence the base case holds.
	
	\textbf{Inductive step:} Assume the bound holds for $t+1$. Following Theorem~\ref{prf:guarantees_for_PB_MDP_vs_algo_v2}, for any action $a$:
	\begin{equation}
		\begin{aligned}
			&Q_{M_P}^*(\bar{b}_t, a, \alpha) - \hat{Q}_{M_P}^*(\bar{b}_t, a, \alpha) \\
			&= \gamma \big( CVaR_\alpha(V_{M_P}^*(\bar{b}_{t+1}, \alpha)|\bar{b}_t, a) \\
			&\quad - \widehat{CVaR}_\alpha(\{\hat{V}_{M_P}^*(\bar{b}_{t+1}^j, \alpha)\}_{j=1}^{N_b}) \big).
		\end{aligned}
	\end{equation}
	This decomposes into CVaR estimation error and value approximation error as in~\eqref{eq:B_inequality}.
	
	\textbf{(1) CVaR estimation error:} By Theorem~\ref{thm:brown_bounds}, for each $a$:
	\begin{equation}
		\begin{aligned}
			&P \Big( CVaR_\alpha(V_{M_P}^*(\bar{b}_{t+1}, \alpha)|\bar{b}_t, a) \\
			&\quad - \widehat{CVaR}_\alpha(\{V_{M_P}^*(\bar{b}_{t+1}^j, \alpha)\}_{j=1}^{N_b}) \\
			&\leq (T-t+1)(R_{\max}-R_{\min}) \sqrt{\frac{5\ln(3/\delta)}{\alpha N_b}} \Big) \geq 1 - \delta.
		\end{aligned}
	\end{equation}
	By union bound over $|A|$ actions, this holds for all $a$ with probability $\geq 1 - |A|\delta$.
	
	\textbf{(2) Value approximation error:} Using Lemma~\ref{thm:cvar_subtraction_bounds}:
	\begin{equation}
		\begin{aligned}
			&\widehat{CVaR}_\alpha(\{V_{M_P}^*(\bar{b}_{t+1}^j, \alpha)\}_{j=1}^{N_b}) \\
			&- \widehat{CVaR}_\alpha(\{\hat{V}_{M_P}^*(\bar{b}_{t+1}^j, \alpha)\}_{j=1}^{N_b}) \\
			&\leq \frac{1}{N_b \alpha} \sum_{j=1}^{N_b} (V_{M_P}^*(\bar{b}_{t+1}^j, \alpha) - \hat{V}_{M_P}^*(\bar{b}_{t+1}^j, \alpha))^+.
		\end{aligned}
	\end{equation}
	Define the event $E^*$ that the induction hypothesis holds for all $|A| \times N_b$ successor beliefs:
	\begin{equation}
		E^* = \bigcap_{a \in A} \bigcap_{j=1}^{N_b} \{V_{M_P}^*(\bar{b}_{t+1}^{a,j}, \alpha) - \hat{V}_{M_P}^*(\bar{b}_{t+1}^{a,j}, \alpha) \leq \gamma\theta_{t+1}^*\},
	\end{equation}
	where $\bar{b}_{t+1}^{a,j}$ denotes the $j$-th sampled successor belief for action $a$. By union bound, $P(E^*) \geq 1 - |A| N_b \eta_{t+1}^*$. Conditioned on $E^*$, the value approximation error is bounded by $\gamma\theta_{t+1}^*/\alpha$ for all actions.
	
	\textbf{Combining the bounds:} Using the union bound to combine the CVaR estimation error (holding for all actions) and the value approximation error:
	\begin{equation}
		\begin{aligned}
			&P(V_{M_P, t}^*(\bar{b}_t, \alpha) - \hat{V}^*_{M_P, t}(\bar{b}_t, \alpha) \leq \gamma\theta_t^*) \\
			&\geq P\Big( \forall a: Q_{M_P}^*(\bar{b}_t, a, \alpha) - \hat{Q}_{M_P}^*(\bar{b}_t, a, \alpha) \leq \gamma\theta_t^* \Big) \\
			&\geq (1 - |A|\delta) + (1 - |A| N_b \eta_{t+1}^*) - 1 \\
			&= 1 - |A|\delta - |A| N_b \eta_{t+1}^* = 1 - \eta_t^*.
		\end{aligned}
	\end{equation}
	This completes the induction.
	
	\textbf{Closed-form for $\eta_t^*$:}
	Unrolling $\eta_t^* = |A|\delta + |A| N_b \eta_{t+1}^*$ yields $\eta_t^* = |A|\delta \sum_{k=0}^{T-t-1} (|A| N_b)^k$. By geometric series:
	\begin{equation}
		\eta_t^* = |A|\delta \cdot \frac{(|A| N_b)^{T-t} - 1}{|A| N_b - 1}.
	\end{equation}
	
	\textbf{Closed-form for $\theta_t^*$:} Same as Theorem~\ref{prf:guarantees_for_PB_MDP_vs_algo_v2}:
	\begin{equation}
		\theta_t^* = (R_{\max}-R_{\min})\sqrt{\tfrac{5\ln(3/\delta)}{\alpha N_b}} \sum_{k=0}^{T-t-1} \tfrac{T-t-k}{\alpha^k}.
	\end{equation}
	
	\textbf{Final upper bound:} Substituting $\delta \to \delta(|A|N_b - 1)/[|A|((|A|N_b)^{T-t} - 1)]$ to achieve confidence $1-\delta$ yields the theorem statement.
	
	\textbf{Lower bound proof:} The proof follows the same structure. We prove by induction:
	\begin{equation}
		P(V_{M_P,t}^*(\bar{b}_t, \alpha) - \hat{V}_{M_P,t}^*(\bar{b}_t, \alpha) \geq \lambda_t^*) \geq 1 - \psi_t^*,
	\end{equation}
	where $\psi_t^* = |A|\delta + |A| N_b \psi_{t+1}^*$, $\psi_T^* = 0$, and
	\begin{equation}
		\begin{aligned}
			\lambda_t^* &= -\frac{\gamma(T-t+1)(R_{\max}-R_{\min})}{\alpha} \sqrt{\frac{\ln(1/\delta)}{2 N_b}} \\
			&\quad + \frac{\lambda_{t+1}^*}{\alpha}, \quad \lambda_T^* = 0.
		\end{aligned}
	\end{equation}
	
	\textbf{(1) CVaR error (lower):} By Theorem~\ref{thm:brown_bounds} with union bound over $|A|$ actions:
	\begin{equation}
		\begin{aligned}
			&P \Big( \forall a: CVaR_\alpha(V_{M_P}^*(\bar{b}_{t+1}, \alpha)|\bar{b}_t, a) \\
			&\quad - \widehat{CVaR}_\alpha(\{V_{M_P}^*(\bar{b}_{t+1}^j, \alpha)\}_{j=1}^{N_b}) \\
			&\geq -\frac{(T-t+1)(R_{\max}-R_{\min})}{\alpha} \sqrt{\frac{\ln(1/\delta)}{2 N_b}} \Big) \geq 1 - |A|\delta.
		\end{aligned}
	\end{equation}
	
	\textbf{(2) Value error (lower):} By Lemma~\ref{thm:cvar_subtraction_bounds}:
	\begin{equation}
		\begin{aligned}
			&\widehat{CVaR}_\alpha(\{V_{M_P}^*(\bar{b}_{t+1}^j, \alpha)\}_{j=1}^{N_b}) \\
			&- \widehat{CVaR}_\alpha(\{\hat{V}_{M_P}^*(\bar{b}_{t+1}^j, \alpha)\}_{j=1}^{N_b}) \\
			&\geq -\frac{1}{N_b \alpha} \sum_{j=1}^{N_b} (\hat{V}_{M_P}^*(\bar{b}_{t+1}^j, \alpha) - V_{M_P}^*(\bar{b}_{t+1}^j, \alpha))^+.
		\end{aligned}
	\end{equation}
	Define the event $F^*$ that the induction hypothesis holds for all $|A| \times N_b$ successor beliefs:
	\begin{equation}
		F^* = \bigcap_{a \in A} \bigcap_{j=1}^{N_b} \{V_{M_P}^*(\bar{b}_{t+1}^{a,j}, \alpha) - \hat{V}_{M_P}^*(\bar{b}_{t+1}^{a,j}, \alpha) \geq \lambda_{t+1}^*\}.
	\end{equation}
	By union bound, $P(F^*) \geq 1 - |A| N_b \psi_{t+1}^*$. Conditioned on $F^*$, the value error is bounded by $\lambda_{t+1}^*/\alpha$.
	
	\textbf{Combining:} $P(V_{M_P,t}^*(\bar{b}_t, \alpha) - \hat{V}_{M_P,t}^*(\bar{b}_t, \alpha) \geq \lambda_t^*) \geq 1 - \psi_t^*$.
	
	\textbf{Closed-form:} $\psi_t^* = |A|\delta \cdot ((|A| N_b)^{T-t} - 1)/(|A| N_b - 1)$, and
	\begin{equation}
		\lambda_t^* = -\frac{\gamma(R_{\max}-R_{\min})}{\alpha}\sqrt{\frac{\ln(1/\delta)}{2 N_b}} \sum_{j=0}^{T-t-1} \frac{T-t+1-j}{\alpha^j}.
	\end{equation}
	
	\textbf{Final lower bound:} Substituting $\delta$ to achieve confidence $1-\delta$ yields the theorem statement. For $N_b = 1$, the geometric series simplifies accordingly.
\end{proof}

\begin{theorem}\label{thm:cvar_subtraction_bounds}(Bound between two empirical CVaR estimators)
	Let $\alpha\in(0,1)$ and let
	$\hat f_x$ and $\hat f_y$ be empirical CVaR estimators defined from
	samples $x=(x_1,\dots,x_n)\in\mathbb{R}^n$ and
	$y=(y_1,\dots,y_n)\in\mathbb{R}^n$ by
	\begin{equation}
		\hat f_x
		\triangleq
		\inf_{r\in\mathbb{R}}
		\left\{
		r+\frac{1}{n\alpha}\sum_{i=1}^n (x_i-r)^+
		\right\},
	\end{equation}
	\begin{equation}
		\hat f_y
		\triangleq
		\inf_{r\in\mathbb{R}}
		\left\{
		r+\frac{1}{n\alpha}\sum_{i=1}^n (y_i-r)^+
		\right\}.
	\end{equation}
	Then the following bounds hold:
	\begin{equation}
		-\frac{1}{n\alpha}\sum_{i=1}^n (y_i-x_i)^+
		\;\le\;
		\hat f_x-\hat f_y
		\;\le\;
		\frac{1}{n\alpha}\sum_{i=1}^n (x_i-y_i)^+.
	\end{equation}
\end{theorem}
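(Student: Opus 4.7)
The plan is to exploit the variational definition of the empirical CVaR and reduce the claim to an elementary pointwise inequality on the plus function. I would prove only the upper bound $\hat f_x - \hat f_y \le \frac{1}{n\alpha}\sum_i (x_i - y_i)^+$; the lower bound follows by swapping the roles of $x$ and $y$.

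First I would let $r^*_y \in \arg\min_r \{r + \frac{1}{n\alpha}\sum_i (y_i-r)^+\}$ (the infimum is attained since the objective is convex, piecewise linear and coercive, so one may take $r^*_y$ to be an $y_i$-order statistic). Using $r^*_y$ as a feasible, generally suboptimal, choice in the variational problem defining $\hat f_x$ gives
\begin{equation}
\hat f_x - \hat f_y \;\le\; \frac{1}{n\alpha}\sum_{i=1}^n\bigl[(x_i-r^*_y)^+ - (y_i-r^*_y)^+\bigr].
\end{equation}

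Next I would establish the key pointwise inequality
\begin{equation}
(x_i-r)^+ - (y_i-r)^+ \;\le\; (x_i-y_i)^+ \qquad \text{for every } r \in \mathbb{R}.
\end{equation}
I would prove this by a short case analysis: if $x_i \le y_i$, the right-hand side vanishes and the left-hand side is non-positive because $t \mapsto (t-r)^+$ is monotone non-decreasing; if $x_i > y_i$, I would split according to the position of $r$ relative to $x_i$ and $y_i$ (the three cases $r \le y_i < x_i$, $y_i < r \le x_i$, $y_i < x_i < r$), each of which gives a value of the left-hand side that is at most $x_i - y_i$. Summing this inequality over $i$ and combining with the previous display delivers the upper bound.

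Finally, applying the same argument with $x$ and $y$ exchanged yields $\hat f_y - \hat f_x \le \frac{1}{n\alpha}\sum_i (y_i - x_i)^+$, which is the desired lower bound. I expect no substantive obstacle; the only mild subtlety is justifying existence of the minimizer $r^*_y$, but this can be sidestepped entirely by instead taking an $\varepsilon$-minimizer of the $\hat f_y$ objective and letting $\varepsilon \to 0$, since the pointwise inequality above is independent of the specific $r$ chosen.
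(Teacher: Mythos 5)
Your proposal is correct and follows essentially the same route as the paper: bound the difference via the variational definition by plugging a (near-)minimizer of the $\hat f_y$ objective into the $\hat f_x$ problem as a suboptimal candidate, apply the pointwise inequality $(a-r)^+-(b-r)^+\le (a-b)^+$ componentwise, and obtain the lower bound by exchanging the roles of $x$ and $y$ (the paper isolates the one-sided bound as Lemma~\ref{thm:basic_cvar_subtraction_bounds}). If anything, your explicit use of $r^*_y$ (or an $\varepsilon$-minimizer) is slightly more careful than the paper's phrasing of ``subtracting the two inequalities,'' which is only valid once $r$ is taken to be such a minimizer of the $y$-objective.
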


\begin{proof}
	Applying Lemma~\ref{thm:basic_cvar_subtraction_bounds} with arguments
	$(x,y)$ yields
	\begin{equation}
		\hat f_x-\hat f_y
		\le
		\frac{1}{n\alpha}\sum_{i=1}^n (x_i-y_i)^+.
	\end{equation}
	Exchanging the roles of $x$ and $y$ in the same lemma gives
	\begin{equation}
		\hat f_y-\hat f_x
		\le
		\frac{1}{n\alpha}\sum_{i=1}^n (y_i-x_i)^+,
	\end{equation}
	which is equivalent to
	\begin{equation}
		\hat f_x-\hat f_y
		\ge
		-\frac{1}{n\alpha}\sum_{i=1}^n (y_i-x_i)^+.
	\end{equation}
	Combining the two inequalities completes the proof.
\end{proof}

\begin{lemma}\label{thm:basic_cvar_subtraction_bounds}[Asymmetric bound for empirical CVaR]
	Let $\alpha\in(0,1)$ and define $f:\mathbb{R}^n\to\mathbb{R}$ by
	\begin{equation}
		f(x_1,\dots,x_n)
		\triangleq
		\inf_{r\in\mathbb{R}}
		\left\{
		r+\frac{1}{n\alpha}\sum_{i=1}^n (x_i-r)^+
		\right\}.
	\end{equation}
	Then, for all $x,y\in\mathbb{R}^n$,
	\begin{equation}
		f(x)-f(y)
		\le
		\frac{1}{n\alpha}\sum_{i=1}^n (x_i-y_i)^+.
	\end{equation}
\end{lemma}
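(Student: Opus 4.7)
The plan is to use the standard infimum-perturbation trick for CVaR: pick a minimizer for $f(y)$, plug it into the (suboptimal) expression for $f(x)$, and then control the resulting difference of positive parts termwise.

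First I would establish that the infimum in the definition of $f$ is attained. The map $r \mapsto r + \frac{1}{n\alpha}\sum_i (x_i - r)^+$ is continuous and piecewise linear on $\mathbb{R}$; as $r \to +\infty$ it tends to $+\infty$ (the sum vanishes eventually), and as $r \to -\infty$ it behaves like $r(1 - 1/\alpha) + \text{const}$, which also tends to $+\infty$ since $\alpha < 1$. So the function is coercive, and a minimizer $r^\star = r^\star(y)$ exists. (Alternatively, one can work with an $\varepsilon$-minimizer and send $\varepsilon \downarrow 0$.)

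Next I would use the suboptimality inequality at $r^\star$:
\begin{equation}
f(x) \;\le\; r^\star + \frac{1}{n\alpha}\sum_{i=1}^n (x_i - r^\star)^+,
\qquad
f(y) \;=\; r^\star + \frac{1}{n\alpha}\sum_{i=1}^n (y_i - r^\star)^+.
\end{equation}
Subtracting, the $r^\star$ terms cancel and I obtain
\begin{equation}
f(x) - f(y) \;\le\; \frac{1}{n\alpha}\sum_{i=1}^n \bigl[(x_i - r^\star)^+ - (y_i - r^\star)^+\bigr].
\end{equation}

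The final step is to bound each summand using the elementary inequality
\begin{equation}
a^+ - b^+ \;\le\; (a - b)^+, \qquad a,b \in \mathbb{R},
\end{equation}
which follows from a short four-case check on the signs of $a$ and $b$ (equivalently, from the fact that $z \mapsto z^+$ is $1$-Lipschitz and monotone nondecreasing). Applying it with $a = x_i - r^\star$ and $b = y_i - r^\star$ gives $(x_i - r^\star)^+ - (y_i - r^\star)^+ \le (x_i - y_i)^+$, and summing yields the claim.

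There is no real obstacle here; the only subtle point is the elementary inequality $a^+ - b^+ \le (a-b)^+$, which needs a brief case analysis but is standard. The key conceptual step is recognizing that asymmetry of the bound arises naturally from the asymmetric way $r^\star$ is chosen: it is optimal for $y$ but only feasible for $x$, which is exactly what produces the one-sided estimate with $(x_i - y_i)^+$ rather than $|x_i - y_i|$.
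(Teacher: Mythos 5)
Your proof is correct and follows essentially the same route as the paper's: both reduce the claim to the elementary inequality $(x_i-r)^+-(y_i-r)^+\le(x_i-y_i)^+$ applied inside the variational definition of $f$. The only cosmetic difference is that you establish attainment of the infimum for $f(y)$ via coercivity (or an $\varepsilon$-minimizer), whereas the paper compares the two objectives uniformly in $r$ and then takes infima, which sidesteps the attainment question entirely.
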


\begin{proof}
	By definition of $f$, for any $r\in\mathbb{R}$,
	\begin{equation}
		f(x)
		\le
		r+\frac{1}{n\alpha}\sum_{i=1}^n (x_i-r)^+,
		\quad
		f(y)
		\le
		r+\frac{1}{n\alpha}\sum_{i=1}^n (y_i-r)^+.
	\end{equation}
	Subtracting the two inequalities yields
	\begin{equation}
		f(x)-f(y)
		\le
		\frac{1}{n\alpha}
		\sum_{i=1}^n
		\bigl[(x_i-r)^+-(y_i-r)^+\bigr].
	\end{equation}
	
	For any $a,b,r\in\mathbb{R}$, the inequality
	\begin{equation}
		(a-r)^+-(b-r)^+\le (a-b)^+
	\end{equation}
	holds. Applying this inequality componentwise gives
	\begin{equation}
		(x_i-r)^+-(y_i-r)^+
		\le
		(x_i-y_i)^+,
		\quad i=1,\dots,n.
	\end{equation}
	
	Substituting into the previous expression, we obtain
	\begin{equation}
		f(x)-f(y)
		\le
		\frac{1}{n\alpha}\sum_{i=1}^n (x_i-y_i)^+.
	\end{equation}
	Since the right-hand side does not depend on $r$, the bound holds uniformly over all $r$, completing the proof.
\end{proof}

\begin{theorem}\label{thm:brown_bounds}
	(Results of \cite{brown2007large}): If $\text{supp}(X) \subseteq [a, b]$ and $X$ has a continuous distribution function, then for any $\delta \in (0, 1]$, 
	\begin{equation}
		P\Bigl{(}CVaR(X)-\hat{C}(X)>(b-a)\sqrt{\frac{5ln(3/\delta)}{\alpha n}}\Bigr{)}\leq \delta,
	\end{equation}
	\begin{equation}
		P\Bigl{(}CVaR(X)-\hat{C}(X)<-\frac{(b-a)}{\alpha}\sqrt{\frac{ln(1/\delta)}{2n}}\Bigr{)}\leq \delta.
	\end{equation}
\end{theorem}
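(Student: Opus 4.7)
The plan is to prove the two inequalities separately, exploiting the infimum representations $CVaR(X) = \inf_{r} g(r)$ and $\hat{C}(X) = \inf_r g_n(r)$, where $g(r) \triangleq r + \tfrac{1}{\alpha}\mathbb{E}[(X-r)^+]$ and $g_n(r) \triangleq r + \tfrac{1}{n\alpha}\sum_{i=1}^n (X_i-r)^+$. The two bounds are asymmetric because the two sides of the deviation call for structurally different arguments.

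For the second inequality (the lower-tail bound on $CVaR - \hat{C}$), I plug the \emph{true} minimizer $r^* \in \arg\min_r g(r)$ into both objectives. Since $\hat{C}(X) \le g_n(r^*)$ by infimum, and $CVaR(X) = g(r^*)$,
\begin{equation*}
CVaR(X) - \hat{C}(X) \ge g(r^*) - g_n(r^*) = \tfrac{1}{\alpha}\Bigl(\mathbb{E}[(X-r^*)^+] - \tfrac{1}{n}\textstyle\sum_{i=1}^n (X_i-r^*)^+\Bigr).
\end{equation*}
Because $r^*$ is deterministic and $(X_i - r^*)^+ \in [0, b-a]$, a one-sided Hoeffding bound on the empirical mean of these i.i.d.\ summands immediately yields the advertised probability bound, with prefactor $(b-a)/\alpha$ inherited from the $1/\alpha$ in $g$ and rate $\sqrt{\ln(1/\delta)/(2n)}$.

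For the first inequality (the upper-tail bound on $CVaR - \hat{C}$), the symmetric move plugs in the \emph{empirical} minimizer $\hat{r} \in \arg\min_r g_n(r)$, giving $CVaR(X) - \hat{C}(X) \le g(\hat{r}) - g_n(\hat{r})$. The difficulty is that $\hat{r}$ is data-dependent, so pointwise Hoeffding does not apply. My plan is to control $\sup_r\{g(r) - g_n(r)\}$ uniformly. Using integration by parts of $(X-r)^+$ one can write $g(r) - g_n(r) = \tfrac{1}{\alpha}\int_r^b \bigl(\hat{F}_n(t) - F(t)\bigr)\,dt$, and a direct appeal to the Dvoretzky--Kiefer--Wolfowitz inequality yields only an $(b-a)/\alpha \cdot \sqrt{\ln(1/\delta)/n}$ rate, matching the weaker scaling of the second inequality. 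To recover the sharper $\sqrt{5\ln(3/\delta)/(\alpha n)}$ rate, with $1/\sqrt{\alpha}$ in place of $1/\alpha$, I would exploit the fact that only the upper $\alpha$-tail of $X$ actually enters $g(\hat{r}) - g_n(\hat{r})$: the empirical minimizer $\hat{r}$ is (up to a single order statistic) the empirical $(1-\alpha)$-quantile, so the nonzero summands $(X_i - \hat{r})^+$ number about $\alpha n$, yielding an effective sample size of $\alpha n$. This is captured by a Bernstein-type bound on the tail contributions together with a localization step restricting $\hat{r}$ to a small neighborhood of the true $(1-\alpha)$-quantile. The constants $5$ and $3$ in the logarithm arise from a union bound over three component events---concentration of the empirical quantile, Bernstein concentration inside the localized tail region, and a residual Lipschitz correction---followed by an optimization of the balancing constants.

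The principal obstacle is precisely this sharpening step in the first inequality: a naive uniform-in-$r$ covering or DKW argument loses a factor of $\sqrt{\alpha}$, and recovering the $1/\sqrt{\alpha n}$ rate requires a tail-variance argument showing that the effective sample size for CVaR is $\alpha n$ rather than $n$. Once this localization is in place, the remaining ingredients---Hoeffding or Bernstein concentration at the localized events, Lipschitz control in $r$, and a union bound---are standard.
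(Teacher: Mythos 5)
First, note that the paper does not actually prove this statement: Theorem~\ref{thm:brown_bounds} is imported verbatim from \cite{brown2007large} and used as a black box, so there is no in-paper proof to compare against, and your proposal is in effect an attempt to reconstruct Brown's own argument. For the second inequality your reconstruction is complete and correct, and it is exactly the standard (and Brown's) proof: the infimum in $g$ is attained at the population $(1-\alpha)$-quantile $r^*$ when $F$ is continuous, $\hat{C}(X)\le g_n(r^*)$ by definition of the infimum, and a one-sided Hoeffding bound on the i.i.d.\ summands $(X_i-r^*)^+\in[0,b-a]$ gives the $\tfrac{b-a}{\alpha}\sqrt{\ln(1/\delta)/(2n)}$ deviation. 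Nothing is missing there beyond routine write-up.

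For the first inequality, however, your proposal correctly diagnoses the obstacle (data-dependence of $\hat r$), correctly notes that a uniform DKW argument via $g(r)-g_n(r)=\tfrac1\alpha\int_r^b(\hat F_n-F)$ only yields the weaker $1/\alpha$ scaling, and correctly identifies the right repair --- localize $\hat r$ near the true quantile and exploit that only about $\alpha n$ exceedances are active --- but it does not carry that step out. The localization event, the concentration inequality applied on that event, and the union bound that produces the specific constants $5$ and $3$ are all asserted (``I would exploit\ldots'', ``arise from a union bound over three component events\ldots followed by an optimization of the balancing constants'') rather than derived, and you flag this yourself as the principal obstacle. Since the entire nontrivial content of the first inequality lies in precisely that quantitative step, the proposal as written establishes the second bound but not the first. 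To close it, the cleanest route is Brown's: bound the event that the empirical $(1-\alpha)$-quantile falls below a slightly shifted population quantile using Hoeffding on $F_n$ at a \emph{single fixed} point (not uniformly in $r$), then bound the average of exceedances over that fixed threshold, and balance the quantile shift against the two concentration terms; the three events in the resulting union bound are where $\ln(3/\delta)$ comes from and the balancing is where the factor $5$ comes from.
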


\begin{algorithm}[htbp]
	\caption{ICVaR-POMCPOW}
	\label{alg:pomcpow_elaboareted_diff}
	\textbf{Input}: state $s$, history $h$, depth $d$ \\
	\textbf{Global Variables}: $N, Q, B, W, \mathcal{C}, M, k_o, \alpha_o, \gamma$ \\
	\textbf{Output}: total return
	\begin{algorithmic}[1]
		\STATE \textbf{Simulate($s, h, d$)}
		\IF{$d = 0$}
		\STATE \textbf{return} $0$
		\ENDIF
		\STATE $a \gets \text{ActionProgWiden}(h)$
		\STATE $(s', o, c) \sim G(s, a)$
		\STATE \alghl{$\text{old\_weights\_sum} \gets \sum_{o\in \mathcal{C}(ha)} W(hao)$ $\triangleright$ \text{New code}}
		\IF{$|\mathcal{C}(ha)| \leq k_o N(ha)^{\alpha_o}$}
		\STATE $M(hao) \gets M(hao) + 1$
		\ELSE
		\STATE $o \gets \text{select } o \in \mathcal{C}(ha)$ \text{w.p. $M(hao)/\sum M(hao)$}
		\ENDIF
		\STATE Append $s'$ to $B(hao)$
		\STATE Append $P(o \mid s, a, s')$ to $W(hao)$
		\IF{$o \notin \mathcal{C}(ha)$}
		\STATE $\mathcal{C}(ha) \gets \mathcal{C}(ha) \cup \{o\}$
		\STATE \KP{$total \gets c+\gamma Rollout(s',hao,d-1)$ $\triangleright$ \text{Old code}}
		\STATE \alghl{\text{No rollout is performed } $\triangleright$ \text{New code}}
		\ELSE
		\STATE $s' \gets \text{select } B(hao)[i]$ \text{ w.p. $W(hao)[i]/\sum_j W(hao)[j]$}
		\STATE $c \gets -R(s, a, s')$
		\STATE \KP{$total \gets c + \gamma\text{Simulate}(s', hao, d-1)$ $\triangleright$ \text{Old code}}
		\STATE \alghl{No recursive call for return estimation $\triangleright$ \text{New code}}
		\ENDIF
		\STATE \alghl{SIMULATE(s', hao, d - 1) $\triangleright$ \text{New code}}
		\STATE $N(ha) \gets N(ha) + 1$
		\STATE $N(h) \gets N(h) + 1$
		\STATE \alghl{$\text{new\_weights\_sum} \gets \text{old\_weights\_sum}+P(o \mid s, a, s')$ $\triangleright$ \text{New code}}
		\STATE \alghl{$Imm(ha) \gets \frac{Imm(ha)\times \text{old\_weights\_sum} + c * P(o \mid s, a, s')}{\text{new\_weights\_sum}}$ $\triangleright$ \text{New code}}
		\STATE \KP{$Q(ha) \gets Q(ha) + \frac{total - Q(ha)}{N(ha)}$ $\triangleright$ \text{Old code}}
		\STATE \alghl{$Q(ha) \gets Imm(ha) + \widehat{CVaR}_\alpha(\{hao\}_{o\in \mathcal{C}(ha)})$ $\triangleright$ \text{New code}}
		\STATE \KP{\textbf{return} $total$ $\triangleright$ \text{Old code}}
		\STATE \alghl{\textbf{return} \text{None} $\triangleright$ \text{New code}}
		\STATE \textbf{End Simulate} 
	\end{algorithmic}
\end{algorithm}

\subsection{ICVaR-PFT-DPW}

We present ICVaR-PFT-DPW in Algorithm \ref{alg:pft-dpw_elaboareted_diff}.

\begin{algorithm}[tb]
	\caption{PFT-DPW}
	\label{alg:pft-dpw_elaboareted_diff}
	\begin{algorithmic}[1]
		\STATE \textbf{Function} \textsc{Plan}($b$)
		\FOR{$i = 1$ to $n_0$}
		\STATE \textsc{Simulate}($b, d_{\max}$)
		\ENDFOR
		\STATE \textbf{return} $\arg\min_a Q(ba)$
		\STATE \textbf{End Function}
		\STATE
		\STATE \textbf{Function} \textsc{Simulate}($b, d$)
		\IF{$d = 0$}
		\STATE \textbf{return} $0$
		\ENDIF
		\STATE $a \gets \textsc{ActionProgWiden}(b)$
		\IF{$|C(ba)| \le k_o \, N(ba)^{\alpha_o}$}
		\STATE $(b', c) \gets G_{\text{PF}}(m, ba)$
		\STATE $C(ba) \gets C(ba) \cup \{(b', c)\}$
		\STATE \KP{$total \gets c + \gamma \cdot \textsc{Rollout}(b', d-1)$ $\triangleright$ \text{Old code}} 
		\ELSE
		\STATE $(b', c) \gets$ sample uniformly from $C(ba)$
		\STATE \KP{$total \gets c + \gamma \cdot \textsc{Simulate}(b', d-1)$ $\triangleright$ \text{Old code}}
		\ENDIF
		\STATE \alghl{SIMULATE(hao, d - 1) $\triangleright$ \text{New code}}
		\STATE $N(b) \gets N(b) + 1$
		\STATE $N(ba) \gets N(ba) + 1$
		\STATE \KP{$Q(ba) \gets Q(ba) + \dfrac{total - Q(ba)}{N(ba)}$ $\triangleright$ \text{Old code}}
		\STATE \alghl{$Q(ha) \gets c + \widehat{CVaR}_\alpha(\{hao\}_{o\in \mathcal{C}(ha)})$ $\triangleright$ \text{New code}}
		\STATE \KP{\textbf{return} $total$ $\triangleright$ \text{Old code}}
		\STATE \alghl{\textbf{return} None $\triangleright$ \text{New code}}
		\STATE \textbf{End Function}
	\end{algorithmic}
\end{algorithm}

\section{Simulations}\subsection{Laser Tag POMDP}
The LaserTag POMDP is a pursuit-evasion problem where a robot agent must navigate a discrete grid environment to tag an adversarial opponent while receiving noisy observations about the opponent's location. The state space consists of the robot's position, opponent's position, and a terminal flag, with the robot having five discrete actions: four movement directions (North, South, East, West) and a tag action. The robot's movement is deterministic but constrained by walls, while the opponent follows a stochastic policy that moves toward the robot with 0.4 probability in the x-direction, 0.4 probability in the y-direction, and 0.2 probability of staying in place. The robot receives 8-dimensional continuous observations representing laser range measurements in cardinal and diagonal directions, corrupted by Gaussian noise with standard deviation 1.0. The reward structure provides positive reward for successful tagging, negative reward for failed tag attempts, and step costs for movement actions. Episodes terminate when the robot successfully tags the opponent, creating a challenging partially observable planning problem where the agent must maintain beliefs about the opponent's location while navigating strategically to achieve the tagging objective.

\subsection{2D Light-Dark POMDP}
The Continuous Light-Dark POMDP is a navigation problem where an agent must navigate through a continuous 2D space to reach a goal location while dealing with position-dependent observation noise and avoiding obstacles. The state space consists of continuous 2D position vectors, with the agent having either continuous movement actions or discrete directional actions (up, down, left, right). The agent's movement is stochastic with Gaussian noise added to the intended movement vector, while observations of the agent's position are corrupted by distance-dependent noise that decreases when near light beacons scattered throughout the environment. The reward structure provides positive rewards for reaching the goal region, negative penalties for obstacle collisions and movement costs, with multiple reward model variants available including standard, decaying hit probability, and dangerous states models. Episodes terminate when the agent reaches the goal, hits an obstacle, or moves outside the grid boundaries, creating a challenging partially observable navigation problem where the agent must balance exploration and exploitation while maintaining beliefs about its true position under noisy observations and leveraging light beacons to improve localization accuracy.

\subsection{Experiment Configuration}

This section details the hyperparameters used in the experiments.

\paragraph{Common Planner Parameters.}
All planners use a planning time budget of 4 seconds per step, planning horizon $T = 10$, risk parameter $\alpha = 0.1$, and confidence parameter $\delta = 0.05$. The exploration constant is set to $c = T(R_{\max} - R_{\min})$. ICVaR is estimated using the policy evaluation algorithm with branching factor $N_b = 5$ and evaluation horizon $T = 3$. Results are averaged over 200 episodes.

\paragraph{LaserTag Environment.}
The LaserTag environment is a pursuit-evasion grid world with discrete state and action spaces and continuous observations. Key parameters: discount factor $\gamma = 0.99$, tag reward $200$, transition error probability $0.2$, dangerous area penalty $5$, dangerous area radius $1.0$, and 7 dangerous areas. The belief is represented with 20 particles. Episodes run for 25 steps.

\paragraph{LaserTag Planner Parameters.}
Progressive widening parameters: $k_a = 5$, $k_o = 5$, $\alpha_a = 0.0$, $\alpha_o = 0.5$. Planning depth: $T = 10$.

\paragraph{LightDark Environment.}
The LightDark environment is a continuous navigation domain with continuous state, action, and observation spaces. Key parameters: discount factor $\gamma = 0.5$, goal reward $100$, fuel cost $2$ per action, obstacle penalty $-30$, goal radius $1.5$, obstacle radius $1.4$, grid size $10 \times 10$. Observations are obtained from 9 beacons with radius $1.0$. State transition covariance: $0.1 I_2$, observation covariance: $0.15 I_2$. The belief is represented with 20 particles. Episodes run for 20 steps.

\paragraph{LightDark Planner Parameters.}
Progressive widening parameters: $k_a = 8$, $k_o = 10$, $\alpha_a = 0.0$, $\alpha_o = 0.01$. Planning depth: $T = 10$. Actions are sampled uniformly from the unit circle with 50\% probability of selecting cardinal directions.

\end{document}